\newmdtheoremenv{boxedtheorem}{Theorem}
\definecolor{codegreen}{rgb}{0,0.6,0}
\definecolor{codegray}{rgb}{0.5,0.5,0.5}
\definecolor{codepurple}{rgb}{0.58,0,0.82}
\definecolor{backcolour}{rgb}{0.95,0.95,0.92}
\lstdefinestyle{mystyle}{
    backgroundcolor=\color{backcolour},   
    commentstyle=\color{codegray},
    keywordstyle=\color{codegreen},
    numberstyle=\tiny\color{codegray},
    stringstyle=\color{codepurple},
    basicstyle=\ttfamily\footnotesize,
    breakatwhitespace=false,         
    breaklines=true,                 
    captionpos=b,                    
    keepspaces=true,                 
    numbers=left,                    
    numbersep=5pt,                  
    showspaces=false,                
    showstringspaces=false,
    showtabs=false,                  
    tabsize=2
}
\theoremstyle{plain}
\newtheorem{theorem}{Theorem}[section]
\newtheorem{lemma}[theorem]{Lemma}
\theoremstyle{definition}
\theoremstyle{remark}
\newcommand{\test}{\mathrm{test}}
\newcommand{\train}{\mathrm{train}}
\newcommand{\val}{\mathrm{val}}
\newcommand{\sumi}{\sum_{i=1}^m}
\newcommand{\sumj}{\sum_{j=1}^n}
\newcommand{\Var}{\mathrm{Var}}
\newcommand{\KL}{\mathrm{KL}}
\newcommand{\R}{\mathbb{R}}
\newcommand{\PP}{\mathbb{P}}
\newcommand{\E}{\mathbb{E}}
\newcommand{\D}{\mathcal{D}}
\newcommand{\cL}{\mathcal{L}}
\newcommand{\X}{\mathcal{X}}
\title{DAVED: Data Acquisition via Experimental Design for Data Markets}
\author{
  Charles Lu\\
  MIT \\
  \And
  Baihe Huang \\
  UC Berkeley\\ 
  \And
  Sai Praneeth Karimireddy \\
  USC, UC Berkeley\\
  \And
  Praneeth Vepakomma \\
  MIT, MBZUAI \\
  \And
  Michael I. Jordan \\
  UC Berkeley\\ 
  \And
  Ramesh Raskar \\
  MIT \\
}
\begin{document}

\maketitle

\begin{abstract}
    The acquisition of training data is crucial for machine learning applications. 
    Data markets can increase the supply of data, particularly in data-scarce domains such as healthcare, by incentivizing potential data providers to join the market.
    A major challenge for a data buyer in such a market is choosing the most valuable data points from a data seller. 
    Unlike prior work in data valuation, which assumes centralized data access, we propose a federated approach to the data acquisition problem that is inspired by linear experimental design. 
    Our proposed data acquisition method achieves lower prediction error without requiring labeled validation data and can be optimized in a fast and federated procedure.
    The key insight of our work is that a method that directly estimates the benefit of acquiring data for test set prediction is particularly compatible with a decentralized market setting.
\end{abstract}

\section{Introduction}\label{sec:intro}
Major breakthroughs in machine learning have only been possible due to the availability of massive amounts of training data. Such collections are still, however, mostly in the province of large companies, and it is an important agenda item for machine learning researchers to develop mechanisms that allow greater data access among smaller players. 
A related point is that many data owners have become resistant to having their data simply collected without their consent or without their participation in the fruits of predictive modeling, resulting in legal challenges against prominent AI companies.
These trends motivate the study of \emph{data marketplaces}, which aim to incentivize data sharing between sellers, that provide access to data, and buyers, that pay compensation for data access~\cite{castro2023data,agarwal2019marketplace,travizano2020wibson}.

In many applications, a data buyer has a specific goal in mind and, in particular, wants training data to predict their test data in a specific context.
Accessing different datapoints may require different prices associated with each datapoint, which may reflect heterogeneous cost, quality, or privacy levels for each datapoint~\cite{nielsen99whose,li2014theory}.
For example, the buyer may be a patient who cares about diagnosing their own chest X-ray and is willing to pay some fixed budget to access other X-ray images to train a model to have low error on their ``test set'' (see Figure~\ref{fig:teaser}).
However, not all X-ray images will be equally relevant. Thus, we want to select only those seller datapoints that are most useful for answering the buyer's query and fit the buyer's budget.

This goal of \emph{data acquisition} has motivated the development of many data-valuation techniques (e.g.~\cite[etc.]{ghorbani2019data,jia2019efficient,kwon2021beta,xu2021validation,sim2022data,kwon2023data,wang2023data, park2023trak,jiang2023opendataval}). However, we argue that current data valuation techniques are misaligned with the data acquisition problem, especially for running data markets. They all face at least one of the following hurdles:
\begin{itemize}
    \item The selection may not be adaptive to the buyer's (unlabeled) test queries and so might not select the most relevant data. In a data marketplace, we expect to buy only a small subset of the datapoints most relevant to the buyer's test data, which may have a very different distribution from the overall seller data distribution.
    \item If adaptive, the techniques rely on labeled validation data, which is often impractical. Further, when a small quantity of such data is available, the selection may overfit the validation data and result in poor performance on the test queries.
    \item The algorithms are not scalable and typically require retraining the ML model numerous times. Hence, they are unable to select from realistic seller corpora ($>$100K+ datapoints).
\end{itemize}

Instead, we propose a new Data Acquisition Via Experimental Design (DAVED) method that overcomes all of these limitations. 
Unlike most previous work in data valuation, our approach does not require a labeled validation dataset and instead directly optimizes data selection for the buyer's unlabeled test queries.
Additionally, our approach accounts for budget constraints and is able to weigh the price of each seller's data against the potential benefit, simultaneously solving the budget and revenue allocation problems~\cite{zhao2023addressing}. Further, it is amenable to a distributed/federated implementation and can thus power decentralized data markets.
In summary, our contributions are the following:
\begin{enumerate}
    \item Formulate the data acquisition problem for data markets and show that data valuation methods make a fundamental theoretical mistake of ``inference after selection'' (Theorem~\ref{thm:data-shapley})
    \item Design DAVED (Data Acquisition Via Experimental Design), a novel, highly scalable, and distributed procedure that does not need validation data. Instead, it directly selects the most cost-effective seller data to answer the buyer's test queries.
     (Algorithm~\ref{alg:opt-procedue}),
    \item Demonstrate state-of-the-art performance on several synthetic benchmarks as well as on real-world medical datasets (MIMIC, Fitzpatrick17K, RSNA Bone Age, and DrugLib).
\end{enumerate}

\begin{figure}
    \centering
    \includegraphics[width=0.95\textwidth]{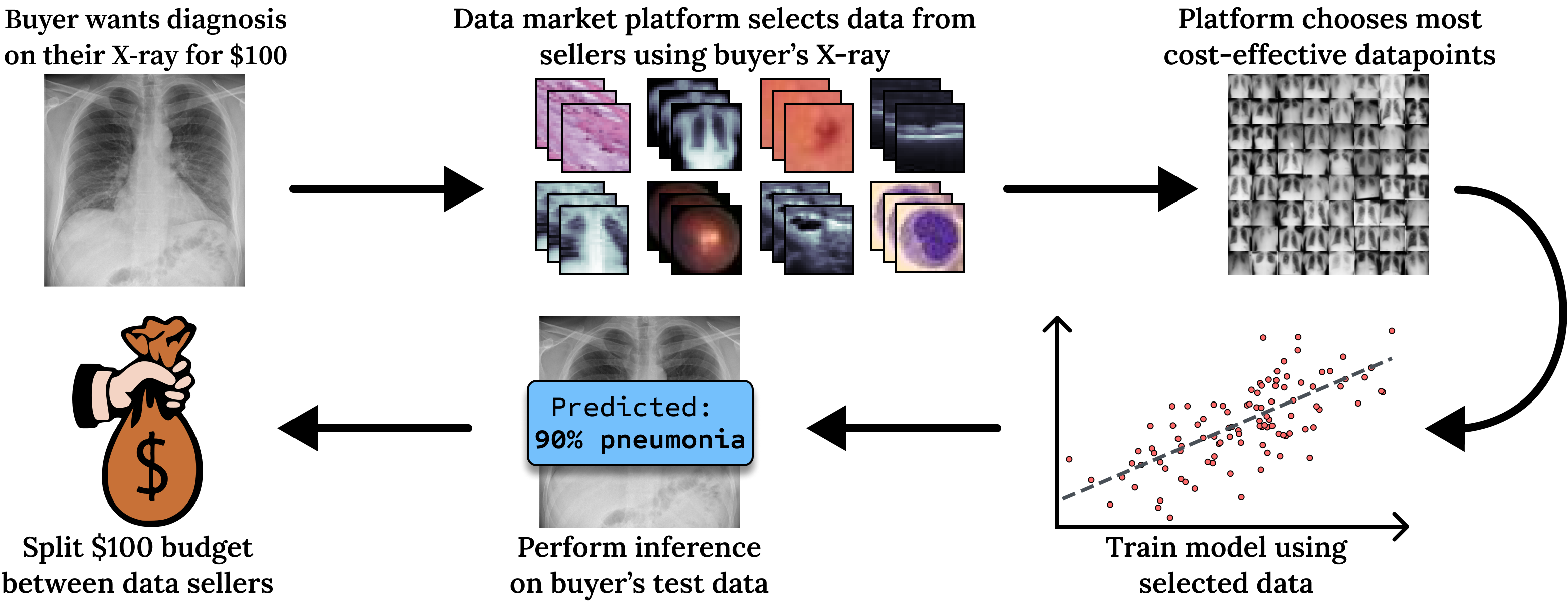}
    \caption{\textbf{Overview of data acquisition process between buyer and seller.} A buyer has a budget to acquire training data to get a prediction on their test query. The market platform optimizes the selection of seller data to be most useful for the buyer's query. The selected data is then used to train a regression model and make a prediction on the buyer's test data.
    }
    \label{fig:teaser}
\end{figure}

\section{Data Acquisition versus Data Valuation}\label{sec:challenges}
In a decentralized data marketplace, data acquisition must be performed \emph{before} full data access is granted to the buyer~\cite{kennedy2022revisiting}.
This is related to Arrow's Information Paradox~\cite{arrow1972economic}, where a seller would be unwilling to give access before payment because data can be easily copied. 
At the same time, a buyer would be unlikely to purchase data without first determining its utility in the context of their data application. 
This distinction between data valuation and data acquisition for data markets is also discussed in a recent data acquisition benchmark, where data value must be estimated without requiring white-box access to the seller's data~\cite{chen2023data}.

A more fundamental issue with validation-based data valuation approaches is exemplified by the Data Shapley value approach~\cite[etc.]{ghorbani2019data,jia2019efficient,kwon2021beta}, which measures the marginal contribution of each training datapoint's improvement to a validation metric. 
They are great for after-the-fact attributing the relative influence of the training data.
However, they cannot be used to \emph{make decisions} about which datapoints should be included in the training.
This is because of, as noted earlier, the \emph{``inference after selection''} issue. Using validation data to select training data leads to substantial over-fitting to the validation data.

\begin{figure}
    \centering
    \includegraphics[width=1\textwidth]{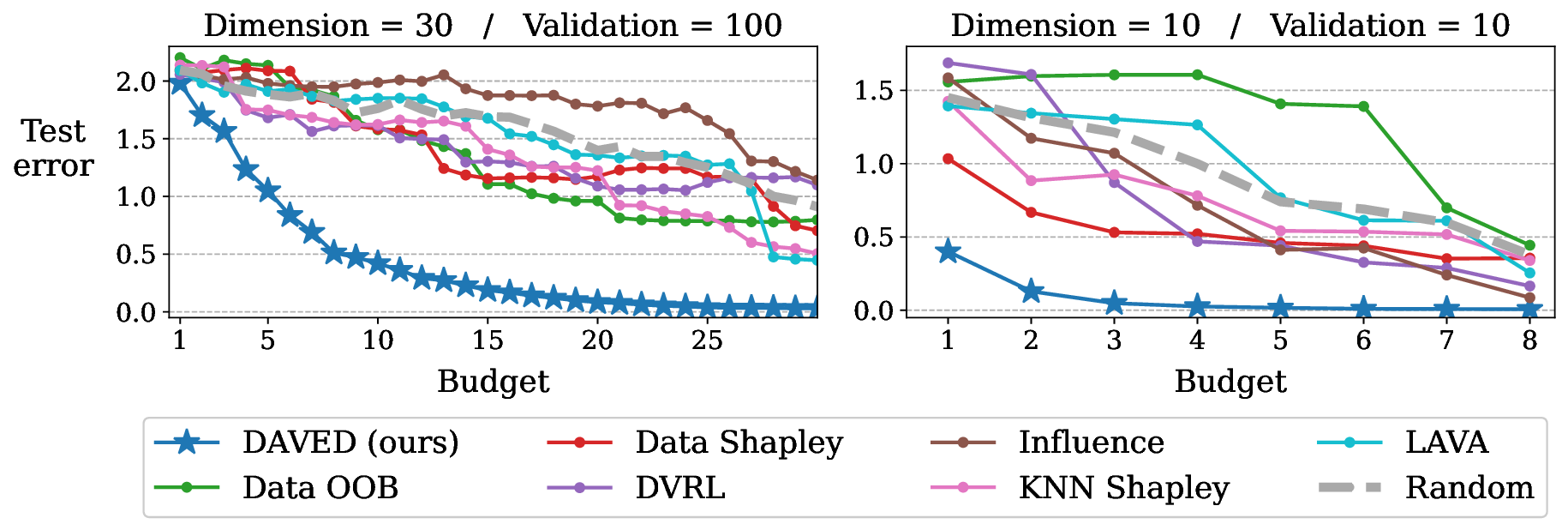}
    \caption{
    \textbf{Current data valuation methods overfit when data is high-dimensional or validation sets are small.}
    A total of 1,000 seller datapoints (each with cost 1) are available in the market. Each method selects training data for various budgets to train a regression model to predict the buyer's test data.
    The left plot shows that validation-based data valuation methods overfit when the data is high dimensional, while the right plot shows that they also overfit when the validation set is small. In contrast, our proposed DAVED achieves lower test error across a range of budgets of over 100 buyers. 
    }
    \label{fig:overfitting}
\end{figure}
\emph{Illustrative example:} Suppose that we only have a single validation datapoint. Then, it is clear that we will select training data similar to this singular datapoint, and our selection has no hope of working on the test dataset. While this clearly demonstrates overfitting in an extreme scenario, we show in Figure~\ref{fig:overfitting} that increasing the validation set size does not circumvent this issue.
%
We see that other data valuation techniques have poor test prediction errors---some techniques even underperform even a random selection baseline! This clearly demonstrates overfitting.
Our proposed method maintains low test error as more seller training data is selected. In Section~\ref{sec:prior}, we will dig in deeper into this phenomenon and prove a very strong theoretical lower bound. We show that any approach that relies upon validation data for data selection can perform as badly as \emph{throwing away all the training data} and simply training on the validation data alone! This is especially true when our budget is small compared to the dimensionality of the problem, as is likely in a data market setting --- the data is typically high dimensional, and we can only select a very small fraction of the total available data.

\section{Setup and Limitations of Prior Methods}\label{sec:prior}
\textbf{Description of Data Acquisition Setting.}
As shown in Figure~\ref{fig:teaser}, in our setting of data acquisition, a buyer has a budget and test data. The platform uses the buyer's data to select training datapoints from the seller that optimize buyer test error.
The interaction proceeds as follows:
\begin{enumerate}
\item The buyer brings their test data $X^{\test} = \left[{x^{\test}_{1}, \dots, x^{\test}_{m}}\right]$  and a budget $B$ to the market platform. This test data is associated with unknown target labels $Y^{\test} = \left[{y^{\test}_{1}, \dots, y^{\test}_{m}}\right]$. 
\item The platform also has access to $n$ datapoints from data sellers $Z^{\train} = \{(x_j,y_j)\}_{j=1, \dots, n}$ and their associated costs $\{c_j\}_{j=1, \dots, n}$.
\item Given \emph{only the covariates} $X^{\test}$, the platform assigns a selection weight $w_j$ to each datapoint $(x_j, y_j)$. This weight, $w_j \in \{0,1\}$ represents the discrete action of selecting datapoint $j$.
\item The platform selects datapoints from the sellers according to $\mathbf{w} = (w_1,\dots, w_n)$, trains a model $f_{\hat \theta(\mathbf{w})}$, makes the predictions $f_{\hat \theta(\mathbf{w})}(X^{\test})$, and distributes $c_j \cdot w_j$ payment for each datapoint used in training.
\end{enumerate}
In general, we do not make i.i.d assumptions between the train and test - we expect the test queries will not be similar to the total available train data. 
The goal then is to pick the weights $\mathbf{w}$, which minimizes the prediction error for the buyer while adhering to their budget constraint $\sumj w_j c_j \leq B$. 
This gives rise to the following problem:
\begin{empheq}{flalign}\label{eq:platform_obj}
    \min_{\mathbf{w} \in \{0, 1\}^n} \cL(\mathbf{w}) &:= \frac{1}{m}\sumi \E\left[l\left(f_{\hat \theta(\mathbf{w})}(x^{\test}_i),y^{\test}_i\right)\right] \quad \text{s.t.} ~ \sumj w_j c_j \leq B. 
\end{empheq}
Here, the expectation is over the conditional label distribution of $y^{\test}_i | x^{\test}_i$, and the potential randomness of the algorithm. Note that this problem can not be solved because we do not know the targets $Y^{\test}$. Instead, we need to rely on a \emph{proxy} $\hat\cL(\mathbf{w})$. One approach to constructing such a proxy is by using validation data.

\textbf{Folly of Relying on Validation Data.}
In most data valuation methods, e.g. Data Shapley~\cite{ghorbani2019data}, the value of data is evaluated using a labeled validation set $Z^{\val} = \{(x_j^{\val},y_j^{\val})\}_{j=1}^{n_{\val}}$. Implicitly, these methods assume that the known $Z^{\val}$ is drawn from the same distribution as the unknown $Z^{\test}$. Then using this validation data, scores $(s_1, \dots, s_n)$ are assigned to the training datapoints $Z^{\train}$. For two datapoints $i,j \in \train$, the score $s_i > s_j$ if the datapoint $i$ is more valuable than $j$~\cite{park2023trak,jiang2023opendataval}. More concretely, $s_i > s_j$ implies that training with $i$ would lead to a smaller validation loss than if $j$ was used instead. Thus, these scores can used to select the most valuable datapoints.

However, note that we used the validation dataset to compute the scores. Thus, selecting the top-k scores results in implicitly minimizing the validation loss i.e., all validation-based data valuation schemes implicitly optimize the following proxy loss
\begin{align}\label{eq:data_shapley_opt}
    \min_{\mathbf{w} \in \{0,1\}^n} \hat\cL^{\val}(\mathbf{w}) := &~ \sum_{j=1}^{n_{\val}}l\left(f_{\hat \theta(w)}(x_j^{\val}), y_j^{\val}\right)  \quad \text{s.t.} ~ \sumj w_j c_j \leq B.
\end{align}
This approach heavily relies on the quantity and quality of the validation dataset in order to generalize to the actual test dataset. In fact, we have the following minimax lower bound even when restricting ourselves to simple linear models.
\begin{boxedtheorem}[Informal version of \Cref{thm:data-shapley-formal}]\label{thm:data-shapley}
    Let $\mathbf{w}^\ast$ denote the solution of our original problem~\eqref{eq:platform_obj} and $\mathbf{\hat w}$ solve~\eqref{eq:data_shapley_opt}. Suppose that all our data $Z^{\train}, Z^{\val}, Z^{\test}$ are drawn i.i.d. from some distribution $\D_{X,Y}$ where $\D_{X}$ is supported on $B_R^d$ (zero-centered ball with radius $R$ in $\R^d$), and $Y = \theta^\top X + \varepsilon$ where $\varepsilon$ is independent zero-meaned noise with variance $\sigma^2$. 
    For any training algorithm, when the number of training data is sufficiently large, with high probability,
\begin{align}
    \inf_{\hat \theta} \sup_{\D_{Y|X}} \E_{X^{\test}}\left[\cL(\mathbf{\hat w}) - \cL(\mathbf{w}^\ast)\right] \gtrsim \frac{\sigma^2d}{n_{\val}}. \notag
\end{align}
\end{boxedtheorem}


This result implies that the expected test error for any validation-based approach can in the worst-case scale as $d/n_{\val}$ with high probability.
This dependence on the dimension $d$ and the number of validation points $n_{\val}$ highlights that this method may be suboptimal in high-dimensional settings or when the validation dataset is small. In fact, we would get the same error scaling if we threw away the training data and trained a model $\hat\theta$ on the $n_{\val}$ validation datapoints alone. This explains the striking overfitting we observed earlier in Figure~\ref{fig:overfitting}. Furthermore, obtaining a large amount of ground-truth labeled data may be challenging in many real-world applications. Instead, we propose a validation-free approach to data acquisition based on experimental design. 

\section{Our Methods and Implementations} \label{sec:our-methods}

We take an alternative approach to define a proxy objective. First, we assume that the conditional distribution $\mathcal{D}_{y|x}$: $y = f_{\theta_*}(x) + \epsilon$ is identical across $Z^{\train}$ and $Z^{\test}$. If this does not hold, our problem becomes intractable since the same $x_j$ could have very different labels in the train and test set. We can recast our problem using the V-optimal experiment design framework~\cite{pukelsheim2006optimal}.


\textbf{Step 1: Linearizing the problem.} Our goal is to design a proxy loss function $\mathcal{\hat L}(\mathbf{w})$ which approximates the true test loss $\mathcal{L}(\mathbf{w})$. To do this, we have to reason about how different choices of training data $S \subset Z^{\train}$ could impact the prediction on a particular test datapoint in $X^{\test}$. This is a notoriously challenging problem for general deep learning models~\cite{basu2020influence}. Instead, we use a linear approximation and model the complicated training dynamics with kernelized linear regression. We suppose we have a known feature-extractor $\phi: \X \to \R^{d_0}$ and an unknown $\theta^\ast \in \R^{d_0}$ such that the data is generated as 
\begin{equation}\label{eq:linear-data}
    y = {\theta^\ast}^\top \phi(x) + \varepsilon\,,
\end{equation}
where $\varepsilon$ is independent noise with mean zero. The function $\phi(\cdot)$ can be the empirical Neural Tangent Kernel (eNTK) \cite{jacot2018neural,long2021properties,wei2021finetuned} of the model, or even the embeddings extracted from a deep neural network. While this may be a bad approximation in general~\cite{yang2020feature}, a recent line of work has shown that such eNTK representation very closely approximates the \emph{fine-tuning} dynamics of pre-trained models both theoretically~\cite{wei2021finetuned,malladi2023kernel} as well as emperically~\cite{fort2020deep,yu2022tct}. In fact, such linear approximations have also been used to speed up validation-based data attribution computations~\cite{park2023trak}.

\textbf{Step 2: Experimental design proxy.}
Given the assumption on our data from Eqn.~\eqref{eq:linear-data}, we can use the V-optimal experiment design framework~\cite{silvey1978optimal,pukelsheim2006optimal,huang2023evaluating} to define a proxy objective. First, suppose that $S \subseteq (\phi(X^{\train}), Y^{\train})$ is the subset selected by $\mathbf{w}$ and then we performed least-squares regression. The resulting estimate $\hat\theta(\mathbf{w})$ can be computed in closed form as
\[
\hat\theta(\mathbf{w}) = \big(\textstyle\sumj w_j \phi(x_j)\phi(x_j)^\top\big)^{\dagger} (\textstyle\sumj w_j \phi(x_j) y_j)\,.
\]
Henceforth, we will drop the $\phi$ when obvious from context and simply use $x$. We can further use Eqn.~\eqref{eq:linear-data} to compute the expected error on an arbitrary test query $x_0, y_0$ as follows:
\begin{align*}
    \E[(\hat\theta(\mathbf{w})^\top x_0 - y)^2 | X^{\train}, x_0] &\stackrel{a_1}{=} \E[\big((\hat\theta(\mathbf{w}) - \theta^\ast)^\top x_0 + \varepsilon \big)^2 ]\\
    &\stackrel{a_2}{=} x_0^\top \E[(\hat\theta(\mathbf{w}) - \theta^\ast)(\hat\theta(\mathbf{w}) - \theta^\ast)^\top] x_0 + \E\|\varepsilon\|^2\\
    &\stackrel{a_3}{=} x_0^\top \E[\hat\theta(\mathbf{w})\hat\theta(\mathbf{w})^\top] x_0 + \E\|\varepsilon\|^2\\
    &\stackrel{a_4}{=} x_0^\top \big(\underbrace{\textstyle\sumj w_j x_jx_j^\top}_{=: \mathcal{I}(\mathbf{w})}\big)^{\dagger} x_0 + \E\|\varepsilon\|^2
\vspace{-2em}
\end{align*}
Here $a_3$ uses the unbiasedness of the OLS estimator and $a_4$ plugs in the closed form of $\hat\theta(\mathbf{w})$ and simplifies. 
With this, we end up with a very clean expression for the expected test error on an arbitrary point $x_0$, and the matrix $\mathcal{I}(\mathbf{w})$ is known as the \emph{Fisher information matrix}. While regression suffices for our use case, the procedure can be extended to general linear models. Dropping the fixed $\E\|\varepsilon\|^2$, we can use this to build our proxy function $\mathcal{\hat L}^{ED}(\mathbf{w})$ and
arrive at the following optimization problem
\begin{empheq}[box=\fbox]{align}\label{eq:design_opt}
    \min_{\mathbf{w} \in \{0, 1\}^n} &~ \left\{\mathcal{\hat L}^{ED}(\mathbf{w}) := \nicefrac{1}{m}\textstyle\sumi (x^{\test}_i)^\top \mathcal{I}(\mathbf{w})^{\dagger} (x^{\test}_i) \right\} \quad \text{s.t.} ~ \textstyle\sumj w_j c_j \leq B.
\end{empheq}
Note that our proxy function $\mathcal{\hat L}^{ED}(\mathbf{w})$ can be computed using just $X^{\train}, X^{\test}$ and does not even need access to training labels. Unfortunately, the objective in \eqref{eq:design_opt} is NP-hard to optimize~\cite{allen2021near}. We next see how to derive fast and provably good approximation algorithms for \eqref{eq:design_opt}.

\textbf{Step 3: Fast approximation.}
To make Eq.~\ref{eq:design_opt} amendable to gradient-based optimization, we drop the constraint that $w_j \in \{0,1\}$ and allow it to be a continuous positive vector i.e. $\mathbf{w} \geq 0\text{ and } \sumj w_jc_j \leq B$. With this relaxation, the proxy objective $\mathcal{\hat L}^{ED}(\mathbf{w})$ is continuous and convex in $\mathbf{w}$~\cite{boyd2004convex}. We then run the ``herding'' variant of the \emph{Frank-Wolfe} algorithm~\cite{wynn1970sequential,khachiyan1996rounding,sun2004computation,bach2012herding,zhao2023analysis}. To do this, define $\left( \mathbf{\tilde w}_t := \mathbf{w}_t / \mathbf{c} \right)$ for any $t$. We start from a $\mathbf{\tilde w_0} = \mathbf{e}_0$ and iteratively update as\footnote{Here, $\mathbf{e}_j$ is the standard basis vector along axis $j$, and in $\mathbf{\tilde w} := \mathbf{w} / \mathbf{c}$, the division is performed element-wise.}
\begin{equation}\label{eq:FW-herding-update}
  \mathbf{\tilde w}_{t+1} \leftarrow (1 - \alpha_t)\mathbf{\tilde w}_t +   \alpha_t\mathbf{e}_{j_t}, \quad \text{ where } j_t = \arg\max_{j \in [n]} (-\nabla_{w_j} \mathcal{\hat L}(\mathbf{w}_t)/{c_j}) 
\end{equation}
Note that if we use the step-size $\alpha_t = \tfrac{1}{t+1}$ in \eqref{eq:FW-herding-update}, $\mathbf{w}_t$ satisfies a special property at any iteration $t$:
\[
     \mathbf{\tilde w}_t \in \Delta^n \text{ and further } (t+1)\mathbf{\tilde w}_t  \in \{0,1 \}^n\,.
\]
Run the procedure until the last iteration $t = t_o$ for which we still have $\|\mathbf{w}_{t_0}\|_1 \leq B$.
We can adapt the theory from \cite{bach2012herding,jaggi2013revisiting} to analyze the above procedure and show the following.
\begin{boxedtheorem}[Informal]\label{thm:herding}
    Let us run Frank-Wolfe herding update \eqref{eq:FW-herding-update} for $t_0$ steps such that it is last step which satisfies $\|\mathbf{w}_{t_0}\|_1 \leq B$. We use $\mathbf{\tilde w}_{t_0} = ((t_0+1)\mathbf{w}_{t_0} / \mathbf{c})$ as our selection vector and we would have selected $t_0$ datapoints. Then, under some assumptions, we have
\begin{align*}
    \mathcal{\hat L}^{ED}\big((t+1)\mathbf{\tilde w}_{t_0}\big) \leq \min_{\mathbf{w} \in \{0,1\}^n, \sumj w_jc_j \leq B}\mathcal{\hat L}^{ED}(\mathbf{w}) + O\bigg(\frac{\log t_0}{t_0}\bigg)\,.
\end{align*}
\end{boxedtheorem}
The above theorem shows that our continuous relaxation does not significantly affect the optimality of our result--we get $O(\frac{\log t_0}{t_0})$ close to the optimal solution to the original NP-hard \eqref{eq:design_opt}. If all datapoints have equal cost $c$, then $t_0 = \lfloor B/c \rfloor$, and so our approximation quality improves as we increase the budget. While better approximation guarantees are attainable~\cite{allen2019convergence}, their procedure is significantly more involved and is not easily amenable to efficient federated implementations as ours is.

\textbf{Step 4: Efficient federated implementation.} Our practical implementation directly restricts $\mathbf{w} \in \Delta^n$ instead of $\mathbf{\tilde w}$ in the theoretical implementation above i.e. we run
\begin{equation}\label{eq:practical-update}
  \mathbf{w}_{t+1} \leftarrow (1 - \alpha_t)\mathbf{w}_t +   \alpha_t\mathbf{e}_{j_t}, \quad \text{ where } j_t = \arg\max_{j \in [n]} (-\nabla_{w_j} \mathcal{\hat L}(\mathbf{w}_t)/{c_j}) 
\end{equation}
This way $\mathbf{w}$ can be directly interpreted to be the sampling probability for different training datapoints. The bottleneck to efficiently implementing \eqref{eq:practical-update} is computing the gradient. At step $t$, the negative gradient can be shown to be
\begin{align}\label{eq:cost_grad}
 g_j := -\nabla_{w_j} \mathcal{\hat L}(\mathbf{w}_t) = \nicefrac{1}{m}\textstyle\sumi \left((x^{\test}_i)^\top \mathcal{I}(\mathbf{w}_t)^{\dagger} (x^{\train}_j)\right)^2.
\end{align}
Thus, if we have the inverse information matrix $\mathcal{I}(\mathbf{w}_t)^\dagger$ pre-computed, $g_j$ as well as the update \eqref{eq:practical-update} can be trivially computed by seller $j$ using only their data $x^{\train}_j$ (and the test data). Next, we show how to efficiently maintain the inverse information matrix.
Note that the FW update~\eqref{eq:practical-update} has a special structure: all coordinates are shrunk, and then only a single coordinate of $\mathbf{w}_t$ is increased. We can relate the resulting $\mathcal{I}$ matrices with a rank-one update as:\
\[
\mathcal{I}(\mathbf{w}_{t+1}) = (1-\alpha_t)\mathcal{I}(\mathbf{w}_{t}) + \alpha_t x_{j_t}x_{j_t}^\top\,.
\]
Define $P_t := \mathcal{I}(\mathbf{w}_{t})^\dagger$. We can use the Sherman–Morrison formula~\citep{sherman1950adjustment} to compute $P_{t+1} = \mathcal{I}(\mathbf{w}_{t+1})^\dagger$ as 
\begin{align}\label{eq:sherman-morrison}
    P_{t+1} = \frac{1}{1-\alpha_t}P_t - \frac{\alpha_t P_t x_{j_t} x^\top_{j_t} P_t}{1 - \alpha_t + \alpha_t x_{j_t}^\top P_t x_{j_t}}.
\end{align}
For each round $t$, this update only involves the current matrix $P_t = \mathcal{I}(\mathbf{w}_{t})^\dagger$ and the single datapoint $x_{j_t}$ selected for the round. Thus, the seller can also locally compute this update as well as the updated cost $\mathcal{\hat L}(\mathbf{w})$ as in Eq.~\eqref{eq:design_opt} for any $\alpha_t$:  
\begin{align}\label{eq:optimal-line-search}
      \mathcal{\hat L}(\mathbf{w}) = \tfrac{1}{m(1-\alpha_t)} \textstyle\sumi \big({x_i}^\top \, P_t \, x_i \big) - \frac{\alpha_t}{1 + \alpha_t x_{j_t}^\top P_t x_{j_t}} \textstyle\sumi\big(  {x_i}^\top P_t \, x_{j_t}\big)^2
\end{align}
Thus, a line search can be performed to determine the optimal step size $\alpha_t \in [0,1]$ to minimize the proxy loss as. This differs from ~\eqref{eq:FW-herding-update} where we used a specific choice of $\alpha_t$. FW is known to be more stable with the line search~\cite{sun2004computation,zhao2023analysis}. The seller can communicate this $\alpha_t$ and $x_{j_t}$ to the platform to compute the updated $P_{t+1}$ using only $O(d)$ communication.
An additional practical consideration is that by initializing $\mathbf{w_0} = c_1 \mathbf{e}_1$, we have an ill-conditioned inverse information matrix $P_0$. We instead use an initialization of $\mathbf{w_0} = \mathbbm{1}_n / n \in \Delta^n$ and further add a small regularization term. This makes the initial $P_0$
\begin{gather}\label{eq:reg-init-matrix}
    P_0 = \left(\left(1 - \lambda_\text{Reg}\right) X^\top \; \text{diag}(\mathbf{w_0}) X \;   + \lambda_\text{Reg} \cdot \sigma_X^2 I_{n\times n} \right)^{-1},
\end{gather}
where $I_{n\times n}$ is the identity matrix and $\sigma_X^2$ is the variance of the training data $X$. 
We typically use $\lambda = 1$ since, in this case, $P_0$ does not depend on the seller data $X$ at all. However, we find that a smaller $\lambda$ could improve performance.
The complete details are summarized in Algorithm~\ref{alg:opt-procedue}.

\textbf{Single-step variant.} We can also forgo the iterative process and instead linearly approximate the cost function (Eq~\ref{eq:design_opt}) with a \emph{single step} that selects the top $k$ datapoints under the budget $B$,
\begin{empheq}[box=\fbox]{align}\label{eq:single-step}
    \mathtt{single\_step}(x^\test, X, B) = \mathtt{top\_k} \big(\big\{\textstyle\sumi \big[ (x_i^\test)^\top P_0 x_j \big]^2\big\}_{j = 1}^n\big).
\end{empheq}
This simplified version is extremely fast while still maintaining relatively good performance. 

\begin{algorithm}[t]\label{alg:opt-procedue}
\caption{DAVED: Iterative Optimization Procedure}
\begin{algorithmic}[1]
     \renewcommand{\algorithmiccomment}[1]{\hfill\# #1}
    \STATE \textbf{Input:} buyer test datapoint $X^\test \in \mathbb{R}^{m \times d}$, seller training data $X \in \mathbb{R}^{n \times d}$, seller weights $\mathbf{w} \in \Delta^n$, iteration steps $T$, regularization parameter $\lambda_\text{Reg} \in [0, 1]$, and seller datapoint costs $\mathbf{c} \in \mathbb{R}_+^n$
    \STATE $\mathbf{w}_0 \leftarrow \mathbbm{1} / n$ \COMMENT{Initialize seller weight vector to uniform distribution}
    \STATE $P_0 \leftarrow \left(\left(1 - \lambda_\text{Reg}\right) X^\top \; \text{diag}(\mathbf{w}_0) \; X  + \lambda_\text{Reg} \cdot \sigma_X^2 I_{n\times n} \right)^{-1}$ \COMMENT{Initialize inverse information matrix (Eq.~\ref{eq:reg-init-matrix})}
    \FOR{ $t \in \{1, 2, \ldots, T\}$}
    \STATE $g \leftarrow -\nabla \mathcal{\hat L}(\mathbf{w}_t)$ \COMMENT{Compute negative gradients (Eq.~\ref{eq:cost_grad}})
    \STATE $j_t \leftarrow {\arg\max}_j \; \left(\nicefrac{g_j}{c_j}\right)$ \COMMENT{Select coordinate based on costs}
    \STATE $\alpha_t \leftarrow \mathtt{LINE\_SEARCH}(\mathcal{\hat L})$ \COMMENT{Find optimal step size (Eq.~\ref{eq:optimal-line-search})}
    \STATE $\mathbf{w}_{t+1} \leftarrow (1-\alpha_t)\mathbf{w}_t +\alpha_t \mathbf{e}_{j_t}$ \COMMENT{Shrink weights and upweight the chosen coordinate (Eq.~\ref{eq:practical-update})}
    \STATE $P_{t+1} \leftarrow \mathtt{SHERMAN\_MORRISON}(P_t, x_{j_t}, \alpha_t)$ \COMMENT{Update inverse information matrix (Eq.~\ref{eq:sherman-morrison})}
    \ENDFOR
    \STATE \textbf{Output:} Sample seller data according to $\mathbf{w}_T \in \Delta^n$ without replacement until budget $B$ runs out.
\end{algorithmic}
\end{algorithm}
\begin{figure}
    \centering
    \includegraphics[width=1\textwidth]{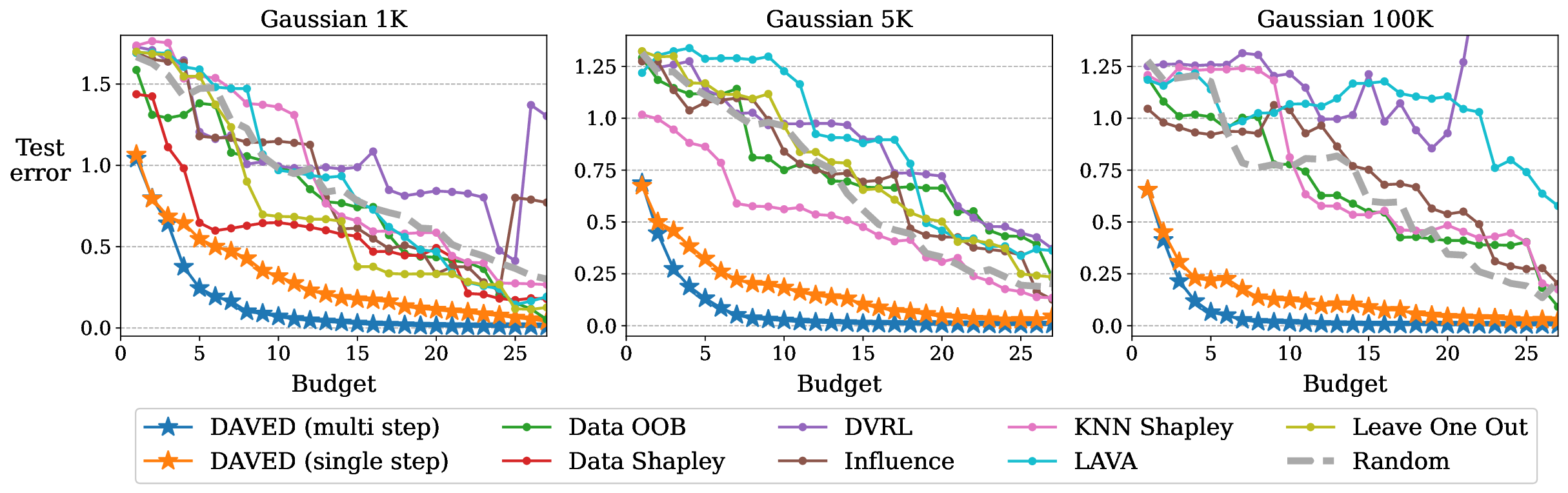}
    \caption{
    \textbf{DAVED achieves better test error against other methods on synthetic data.}
    For three amounts of total seller data (1K, 5K, 100K), each method is optimized to select the most valuable training datapoints from the seller to predict the buyer's test data.
    Our data selection algorithm based on experimental design achieves better test MSE on the buyer's data with fewer training points.
    }
    \label{fig:gauss-num-seller}
\end{figure}

\begin{figure}
    \centering
    \includegraphics[width=1\textwidth]{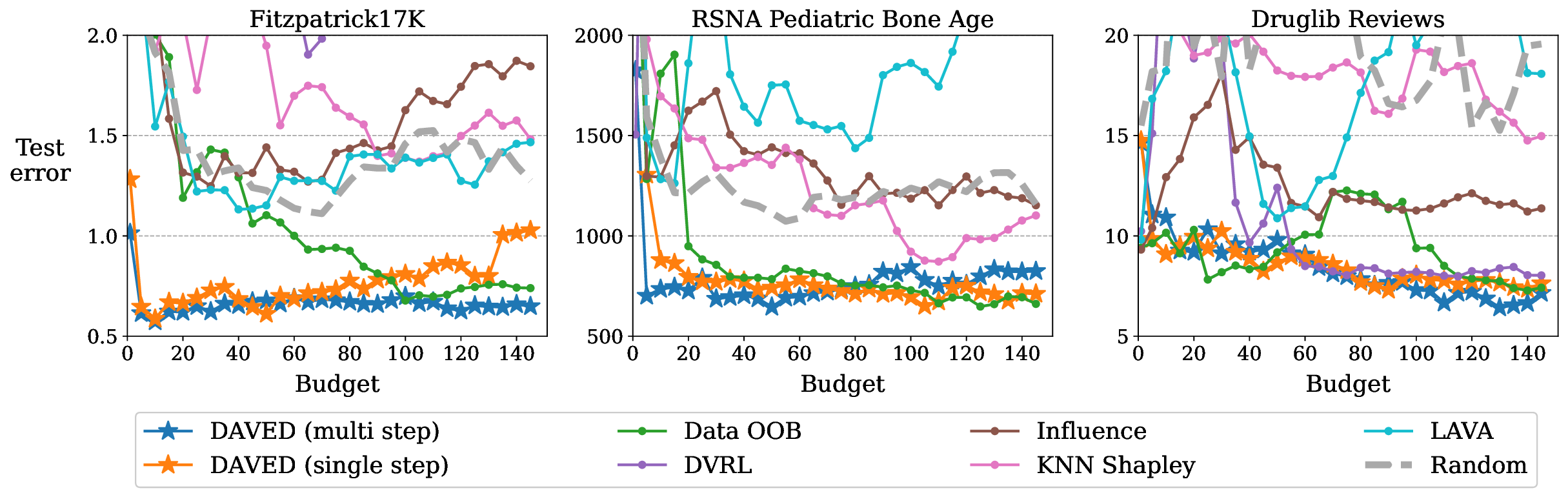}
    \caption{
    \textbf{DAVED has low test error on real-world medical imaging and drug review data.}
    We compare our method against other data valuation methods on two medical imaging datasets (Fitzpatrick17K and RSNA Bone Age) embedded through CLIP and the DrugLib review dataset embedded through GPT-2. 
    After optimization, each data valuation method selects the top-$k$ most valuable datapoints from the seller to train a regression model to predict the buyer's test data.
    Our data selection algorithm based on experimental design achieves lower prediction mean squared error on the buyer's data with fewer training points.
    }
    \label{fig:embedding}
\end{figure}

\begin{table}[ht]
    \centering
    \begin{tabular}{l|c*{6}{|c}}
        \toprule
        \multirow{2}{*}{\textbf{Method}} & \multicolumn{2}{c|}{\textbf{Gaussian}} &  \multicolumn{2}{c|}{\textbf{MIMIC}} &  \multicolumn{1}{c|}{\textbf{RSNA}} & \multicolumn{1}{c|}{\textbf{Fitzpatrick}} & \multicolumn{1}{c}{\textbf{DrugLib}} \\
         & \textsc{1K} & \textsc{100K} & \textsc{1K} & \textsc{35K} & \textsc{12K} & \textsc{15K} & \textsc{3.5K}\\
        \hline
        Random baseline                        & 1.38 & 1.01 & 301.0 & 283.7 & 1309.1 & 1.49 & 21.4 \\
        Data Shapley~\cite{ghorbani2019data}   & 0.87 & N/A  & 294.9 & N/A   & N/A    & N/A  & N/A \\
        Leave One Out~\cite{cook1977detection} & 1.31 & N/A  &1125.0 & N/A   & N/A    & N/A  & N/A \\
        Influence~\cite{feldman2020neural}     & 1.47 & 0.97 & 189.4 & 876.4 & 1614.5 & 1.93 & 12.8 \\
        DVRL~\cite{yoon2020data}               & 1.33 & 1.26 & 229.7 & 285.5 & 3528.8 & 3.00 & 12.6 \\
        LAVA~\cite{just2023lava}               & 1.47 & 1.10 & 190.9 & 417.3 & 1867.5 & 1.45 & 17.4 \\
        KNN Shapley~\cite{jia2019efficient}    & 1.55 & 1.18 & 175.7 & 229.6 & 1387.0 & 1.82 & 19.0 \\
        Data OOB~\cite{kwon2023data}           & 1.24 & 0.98 & \textbf{169.7} & \underline{215.6} & 1020.3 & 1.35 & 10.0 \\
        DAVED (single step)                     & \underline{0.58} & \underline{0.27} & 277.4 & 659.9 &  900.2 & \underline{0.73} &  \textbf{9.0} \\
        DAVED (multi-step)                      & \textbf{0.37} & \textbf{0.16} & \underline{206.7} & \textbf{171.4} &  \textbf{785.2} & \textbf{0.67} &  \underline{9.2} \\
        \bottomrule
    \end{tabular}
    \caption{\textbf{Our method achieves a lower buyer error than other common data valuation methods.} We compared the test mean squared error on the buyer test point on a synthetic Gaussian dataset and four medical datasets: MIMIC, RSNA, Fitzpatrick17K, and DrugLib. The subheading denotes the number of seller training data available for that experiment, and ``N/A'' denotes that the method exceeded runtime constraints for the experiment. 
    We optimize a separate random sample of training and validation data for each buyer and average over 100 buyers. 
    Bolded values indicate the best-performing method and underlined values denote the second-best-performing method.
    }
    \label{tab:error-results}
\end{table}

\begin{table}
    \centering
    \small
    \begin{tabular}{l*{10}{|c}}
        \toprule
            & \multicolumn{2}{c|}{\textbf{Gaussian}} &  \multicolumn{2}{c|}{\textbf{MIMIC}} &  \multicolumn{2}{c|}{\textbf{RSNA}} & \multicolumn{2}{c|}{\textbf{Fitzpatrick}} & \multicolumn{2}{c}{\textbf{DrugLib}} \\
        \midrule
        \textsc{Cost function} & $\sqrt{c}$ & $c^2$ & $\sqrt{c}$ & $c^2$ & $\sqrt{c}$ & $c^2$ & $\sqrt{c}$ & $c^2$ & $\sqrt{c}$ & $c^2$ \\
        \midrule
        Random baseline     & 2.26 & 77.7 & 288.3 & 284.6  &  2253.8 & 2065.0 & 2.14  & 2.11 & 16.5 & 18.4 \\
        DVRL                & 1.67 &  \underline{2.1} & \underline{213.9} & \underline{214.5}  & 24003.1 & 5588.3 & 1.46  & 8.90 & 22.5 & 20.7 \\
        LAVA                & 2.13 &  3.3 & 482.1 & 474.6  &  \underline{1666.9} & 1586.7 & 2.09  & 2.21 & 35.7 & 34.8 \\
        KNN Shapley         & 2.13 & 69.0 & 217.1 & 956.0  &  2753.9 & 2505.8 & 1.85  & 2.15 & 13.6 & 13.0 \\
        Data OOB            & 2.19 &  3.3 & 242.8 & 245.4  &  1695.3 & \underline{1205.2} & 2.08  & 2.52 & \underline{10.8} & \underline{10.8} \\
        DAVED (single step) & \underline{1.54} & 251.5 & 598.4 & 585.1  & 1734.3 & 1550.0 & \textbf{0.75}  & \textbf{0.71} &  \textbf{9.4} & \textbf{10.1} \\
        DAVED (multi-step)  & \textbf{0.04} &  \textbf{0.2} &  \textbf{168.9} & \textbf{168.7}  & \textbf{1076.3} &  \textbf{942.1} & \underline{0.76}  & \underline{0.75} & 12.6 & 11.4 \\
        \bottomrule
    \end{tabular}
    \caption{Comparing data selection methods for two different cost functions, $\sqrt{c}$ and $c^2$. For each budget constraint, we select seller points until the budget is exceeded and calculate test prediction error on the buyer data. We average over 100 buyers and report the mean test error across budgets from 1 to 30.}
    \label{tab:budget-results}
\end{table}

\section{Experiments}\label{sec:exp}
We evaluate our proposed method for data acquisition (DAVED) against common data valuation methods on synthetic Gaussian-distributed data as well as four real-world medical datasets:
\begin{enumerate}
    \item \textbf{Fitzpatrick17K}~\cite{groh2021evaluating}, a skin lesion dataset, where the task is to predict Fitzpatrick skin tone on a 6-point scale from dermatology images.
    \item \textbf{RSNA Pediatric Bone Age dataset}~\cite{halabi2019rsna}, where the task is to assess bone age (in months) from X-ray images of an infant's hand.
    \item \textbf{Medical Information Mart for Intensive Care (MIMIC-III)}~\cite{johnson2016mimic}, where the task is to predict the length of hospital stay from 48 attributes such as demographics, insurance, and medical conditions.
    \item \textbf{DrugLib reviews}~\cite{misc_drug_review_dataset_(druglib.com)_461}, text reviews of drugs where the task is to predict ratings (1-10).
\end{enumerate}
    For Fitzpatrick17K and RSNA Bone Age datasets, each image was embedded through a CLIP ViT-B/32 model~\cite{radford2021learning}, while for the DrugLib dataset, each text review was embedded through GPT-2 model ~\cite{radford2019language} with a max context length of 4096. 
    For validation-based methods, we use a validation set of 100 datapoints. We report mean test errors over 100 buyers.
    For more details on the experimental setup, see Appendix~\ref{app:setup}.

\textbf{Comparing Performance on Data with Homogeneous Costs.}
    In Figure~\ref{fig:gauss-num-seller}, we evaluate our method and several other data valuation methods on varying amounts of Gaussian data with homogeneous fixed costs.
    Compared to other methods, both multi- and single-step versions of DAVED have lower test errors across budgets on synthetic data. 
    This performance gap is especially large when the buyer has a small budget (around 5-10 training datapoints). 
    In Figure~\ref{fig:embedding}, we evaluate our method on real image and text data embedded through CLIP and GPT-2 feature representations. We observe that DAVED has better performance compared to most other baselines on all three datasets, highlighting that the proposed method is practical for embeddings of high-dimensional data.
    Table~\ref{tab:error-results} summarizes our results on all datasets.
    For the Gaussian and MIMIC datasets, we report the mean error of budgets from 1 to 10, while for the embedded datasets (RSNA, Fitzpatrick17K, DrugLib), we report the mean error of budgets from 1 to 100 in intervals of five.


\textbf{Comparing Performance on Data with Heterogeneous Costs.} 
    Next, we compare methods on seller data with non-homogeneous costs.
    We uniformly sample costs $c \in \{1, 2, 3, 4, 5\}$ for each seller point and consider two cost functions, $c_j = \sqrt{c}$ and $c_j = c^2$,  which downweights gradient of that datapoint $x_j$ (see Equation~\ref{eq:cost_grad}).
    To simulate heterogeneous utility across datapoints, we introduce cost-dependent label noise, $\epsilon \sim \mathcal{N}(\bar{y}, \sigma^2)$, to each datapoint $\tilde{y_i} := y_i + \beta \tilde{\epsilon} / c_j$, where $\bar{y}$ is the mean target value and $\beta$ is the overall noise level, which we fix at $30\%$ throughout our experiments.
    For these experiments, we did not evaluate Data Shapley~\cite{ghorbani2019data}, LOO~\cite{cook1977detection}, and Influence~\cite{feldman2020neural} that had very long runtimes.
    In Table~\ref{tab:budget-results}, we report additional mean test error across budgets 1--30 for both cost functions. 
    We find that our DAVID method is more budget-efficient in choosing cost-effective noisy datapoints than other methods across datasets. 
    We provide additional plots for heterogeneous costs in Appendix~\ref{app:budget}.
    
\textbf{Comparing Runtime.}
    In Figure~\ref{fig:runtime}, we compare the optimization runtime of our data selection method on 1,000 datapoints while increasing the dimensionality of the data as well as when the dimensionality is fixed to 30, and the number of seller datapoints is increased to 100,000. 
    Data Shapley~\cite{ghorbani2019data} and LOO~\cite{cook1977detection} took too long to run for large amounts of datapoints or high dimensional data and are not reported. 
    In both experiments, our multi-step compares favorably to efficiency-optimized techniques such as KNN Shapley~\cite{jia2019efficient} while our single-step method had the fastest runtime. This demonstrates that our method can scale to marketplaces with millions of datapoints. 

\begin{figure}
    \centering
    \includegraphics[width=0.99\columnwidth]{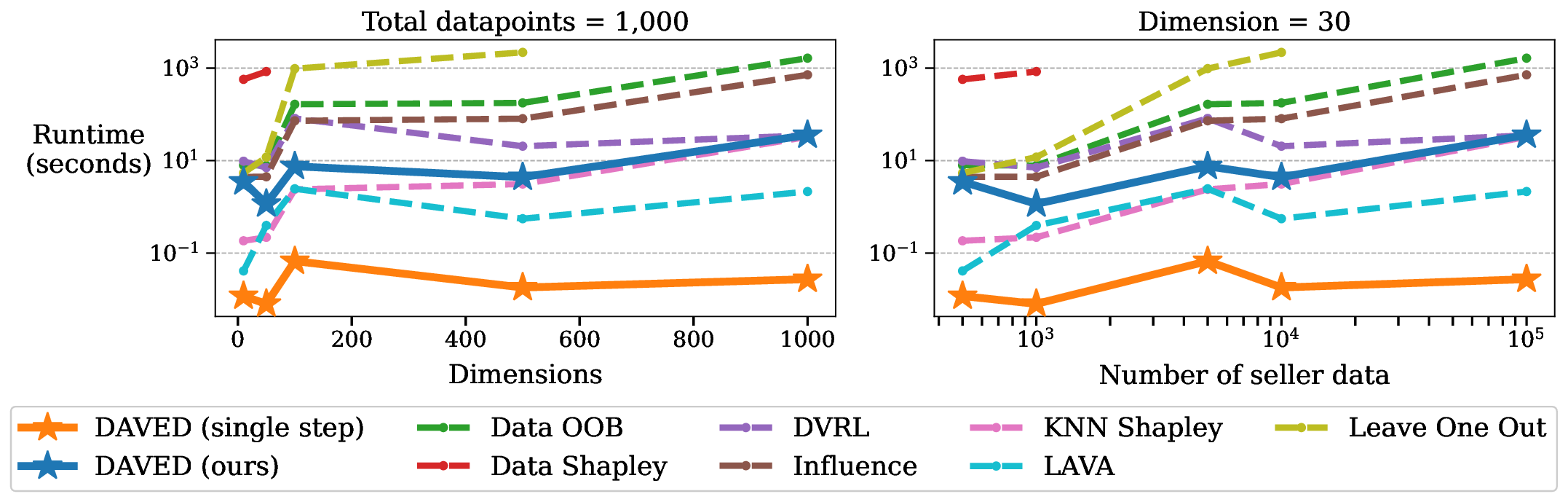}
    \caption{\textbf{DAVED has lower runtime than model-based data valuation methods.}
        The left subplot shows runtimes of varying data dimensions when fixing the number of datapoints at 1,000. while the right subplot shows the runtimes of varying the number of total seller datapoints when the dimension is fixed. In both cases, we see that our method is orders of magnitude faster than Data Shapley and the single-step variant of our method is faster than even optimized data valuation methods such as KNN Shapley and model-free methods such as LAVA.
    }
    \label{fig:runtime}
\end{figure}

\textbf{Regularization Strength.}
    In Appendix~\ref{app:reg}, we vary the amount of regularization applied on the MIMIC, DrugLib, and RSNA datasets.
    We find that applying a moderate amount of regularization between 0.2 and 0.6 can lead to improved performance.
    Even when the information matrix is set to identity, i.e., $\lambda = 1$, performance on the DrugLib datasets is still reasonable. Note that for all other experiments, we do not apply any regularization. 

\textbf{Amount of Buyer Data.}
    In Appendix~\ref{app:buyer-data}, we vary how many buyer test datapoints are simultaneously optimized over on Gaussian, MIMIC, and RSNA datasets.
    While all buyer and seller data is sampled from the same distribution, the number of buyer datapoints still affects the optimization procedure. 
    In general, we find that increasing the number of datapoints in the ``test batch'' increases test errors.
    Therefore, we recommend keeping the number of test datapoints in the buyer's ``query'' between 1--8 for each data acquisition. 

\textbf{Number of Steps.}
    In Appendix~\ref{app:steps}, we vary the number of optimization steps in our method on the Gaussian and RSNA datasets. 
    We find that more iterations generally improve prediction performance. 
    Intuitively, one expects that selecting $T$ points requires at least $T$ steps of iterative optimization. 
    We recommend setting the number of steps to be 2--5 times the desired budget for homogeneous costs. 

\textbf{Convex versus Iterative Optimization}
    In Appendix~\ref{app:convex}, we compare the iterative optimization procedure against a convex optimization solver~\cite{diamond2016cvxpy}.
    We find that our iterative approach results in several orders of magnitude speedup while maintaining similar levels of test error.

\textbf{Finetuning versus Linear Probe.}
    In Figure~\ref{fig:bert-finetune}, we evaluate fine-tuning versus linear probing for datapoints selected using DAVID and random selection.  
    We find that using DAVID for fine-tuning performs similarly to linear probing results on DrugLib with BERT~\cite{devlin2018bert}.




\section{Discussion}\label{sec:discussion}
    Current data valuation approaches that rely on centralized data access are not well-suited to data market settings~\cite{chen2023data}.
    While there exists other work in validation-free data valuation~\cite{xu2021validation,amiri2023fundamentals}, these methods are not adaptive to the test dataset and are not theoretically grounded like ours.
    Our method (DAVED) outperforms current SOTA data valuation techniques achieving both lower test error and faster runtime. 
    Moreover, a major advantage of our method is that it is amendable to federated optimization requiring $O(d)$ communication per round, making it well-suited for decentralized data markets, unlike other methods that require seller data to be centralized in order to repeatedly train models to estimate data value.
    Additionally, our method does not require labeled data, whereas other data valuation methods assume that all datapoints come with corresponding ground-truth labels.
    As discussed in Section~\ref{sec:challenges} and Section~\ref{sec:prior}, the existing paradigm of valuing data with a validation set is suboptimal.
    Incidentally, the second-best performing method, Data OOB~\cite{kwon2023data}, is the only other method that does not use a validation set.

    \textbf{Limitations.} 
    However, our algorithm comes with some limitations that form exciting directions for future work.
    Our approach currently communicates every step. Instead, integrating local steps like in FedAvg~\cite{mcmahan2017communication} or Scaffold~\cite{karimireddy2020scaffold} would decrease communication costs. 
    Further, integrating differential privacy techniques would provide formal privacy guarantees to the buyers and sellers~\cite{dwork2006differential}.
    

\section*{Acknowledgments}
C.L. is funded by the National Science Foundation Graduate Research Fellowship Program. 
S.P.K. and M.J. were funded by the European Union (ERC-2022-SYG-OCEAN-101071601).
P.V. was partially supported by the ADIA Lab Fellowship.

\bibliography{references}
\bibliographystyle{plainnat}


\appendix
\newpage
\setlength\abovecaptionskip{4pt}
\setlength\belowcaptionskip{4pt}

\section{Proof of Theorem~\ref{thm:data-shapley}}
\begin{theorem}\label{thm:data-shapley-formal}
Let $w^*$ denote the solution of Problem~\eqref{eq:platform_obj} and let $\hat w$ denote the solution of Problem~\eqref{eq:data_shapley_opt}. Let the data $Z^{\val}, Z^{\test}$ are drawn i.i.d. from the distribution $\D_{X,Y}$ where $\D_{X}$ is supported on $B_R^d$ (zero-centered ball with radius $R$ in $\R^d$), and $Y = \theta^\top X + \eta$ where $\eta$ is independent zero-meaned noise with variance $\sigma^2$. Suppose $\D_X$ and the training data $X^{\train}$ is supported on $B_R^d$ (zero-centered ball with radius $R$ in $\R^d$) and $l$ is square loss, then there exist numerical constants, $c_1,c_2,c_3$, such that:
\begin{enumerate}
    \item With probability at least $1-\exp\left(-c_1 n_{\val}/R^2\right)$,
\begin{align}
    \inf_{\hat \theta} \sup_{\D_{Y|X}, X^{\train}} \E_{X^{\test}}\left[\cL(\mathbf{\hat w}) - \cL(\mathbf{w}^*)\right] \geq \frac{c_2\sigma^2d}{n_{\val}}. \notag
\end{align}
\item If there exists $\kappa > 0$ such that $\lambda(\E_{x \sim \D_X}[x^{\otimes 4}]) \leq \kappa \cdot \lambda\left(\E_{x \sim \D_X}[x^{\otimes 2}]^{\otimes 2}\right)$ (here $\lambda$ denotes the largest eigenvalue and $\otimes$ denotes the outer product), then for any training algorithm used by the platform, with probability at least $0.99 -\exp\left(-c_1n_{\val}/R^2\right) - \frac{c_3\kappa}{\kappa + m}$, we have
\begin{align}
    \sup_{\D_{Y|X},X^{\train}} \cL(\mathbf{\hat w}) - \cL(\mathbf{w}^*) \geq \frac{c_2\sigma^2d}{n_{\val}}. \notag
\end{align}
\end{enumerate}
\end{theorem}

\begin{proof}
Let $f_\theta(x) = \theta^\top x$ and $\mathcal{E} = N(0,\sigma^2)$. Define the parameter space resulting from the training algorithm:
\begin{align*}
    \Theta = \left\{\hat \theta(\mathbf{w}): \mathbf{w} \in \Delta([m])\right\}.
\end{align*}
When $n$ is sufficiently large, Lemma~\ref{lem:packing} implies that there exists $\{\theta_1,\theta_2,\dots,\theta_K\} \subset \Theta$ such that
\begin{align*}
    \left\|\theta_i ^\top X^{\val}\right\|_2 \lesssim &~ \delta \sqrt{n_{\val}}, ~\forall i \in [K]\\
    \left\|(\theta_i - \theta_j)^\top X^{\val}\right\|_2 \asymp &~ \delta \sqrt{n_{\val}}, ~\forall i<j \in [K].
\end{align*}
Let $\PP_{\theta_i}$ denote the conditional distribution $\D_{y|x}$ of the target when the underlying model parameter is $\theta_i$, it then follows that
\begin{align*}
    \KL(\PP_{\theta_i}\|\PP_{\theta_j}) \lesssim \frac{n\delta^2}{\sigma^2}.
\end{align*}
Applying Lemma~\ref{lem:fano}, we obtain that
\begin{align}\label{eq:lower-bound-val}
    \inf_{\hat \theta} \sup_{\D_{Y|X},X^{\train}} \E\left[\frac{1}{n_{\val}}\left\|(\hat \theta(\mathbf{\hat w}) - \theta^*)^\top X^{\val}\right\|_2^2\right] \gtrsim \frac{c_2\sigma^2d}{n_{\val}}.
\end{align}
Now define $\Sigma = \E_{x \sim \D_X}[xx^\top]$, by Lemma~\ref{lem:matrix-chernoff}, we have that with probability at least $1-\exp(-\Omega(n_{\val}/R^2))$,
\begin{align}\label{eq:val-concentration}
    \frac{1}{2} \Sigma \preceq X^{\val}(X^{\val})^\top  \preceq 2 \Sigma.
\end{align}
Notice that
\begin{align*}
    \E_{X^{\test}}\left[\cL(\mathbf{\hat w}) - \cL(\mathbf{w}^*)\right] = \left\|(\hat \theta(\mathbf{\hat w}) - \theta^*)^\top\right\|_\Sigma^2.
\end{align*}
Therefore, under the event of Eq.~\eqref{eq:lower-bound-val}, Eq.~\eqref{eq:val-concentration} implies that
\begin{align*}
    \inf_{\hat \theta} \sup_{\D_{Y|X},X^{\train}} \E_{X^{\test}}\left[\cL(\mathbf{\hat w}) - \cL(\mathbf w^*)\right] \geq &~ \inf_{\hat \theta} \sup_{\D_{Y|X},X^{\train}} \frac{1}{2}\E\left[\frac{1}{n_{\val}}\left\|(\hat \theta(\hat w) - \theta^*)^\top X^{\val}\right\|_2^2\right] \\
    \gtrsim &~ \frac{c_2\sigma^2d}{n_{\val}}
\end{align*}
This establishes the first inequality.

For the second inequality, we have that under the condition $\lambda(\E_{x \sim \D_X}[x^{\otimes 4}]) \leq \kappa \cdot \lambda\left(\E_{x \sim \D_X}[x^{\otimes 2}]^{\otimes 2}\right)$, the following holds
\begin{align*}
    \Var_{x \sim D_X}\left(\left\langle\hat \theta(\mathbf{\hat w}) - \theta^*, x\right\rangle^2\right) = &~ \E\left[\left\langle\hat \theta(\mathbf{\hat w}) - \theta^*, x\right\rangle^4\right] - \E\left[\left\langle\hat \theta(\mathbf{\hat w}) - \theta^*, x\right\rangle^2\right]^2\\
    = &~ \E\left[\left\langle(\hat \theta(\mathbf{\hat w}) - \theta^*)^{\otimes 4}, x^{\otimes 4}\right\rangle\right] - \E\left[\left\langle\hat \theta(\mathbf{\hat w}) - \theta^*)^{\otimes 2}, x^{\otimes 2}\right\rangle^2\right]\\
    = &~ \left\langle(\hat \theta(\mathbf{\hat w}) - \theta^*)^{\otimes 4}, \E\left[x^{\otimes 4}\right] - \E\left[x^{\otimes 2}\right]^{\otimes 2} \right\rangle\\
    \leq &~ (\kappa - 1) \cdot \E\left[\left\langle\hat \theta(\mathbf{\hat w}) - \theta^*, x\right\rangle^2\right]^2.
\end{align*}
By Lemma~\ref{lem:paley-zygmund}, for any $\theta$ and $\theta^*$, we have that with probability at least $0.99 - O\left(\frac{\kappa}{\kappa + m}\right)$,
\begin{align*}
    \cL(\mathbf{\hat w}) - \cL(\mathbf{w}^*) = &~ \frac{1}{m}\left\|(\theta - \theta^*)^\top X^{\test}\right\|_2^2 \\
    = &~ \frac{1}{m}\sumi \left\langle\theta - \theta^*, x^{\test}_i\right\rangle^2 \\
    \geq &~ 0.0001\cdot\left\|(\hat \theta(\mathbf{\hat w}) - \theta^*)^\top\right\|_\Sigma^2\\
    \gtrsim &~ \frac{c_2\sigma^2d}{n_{\val}}.
\end{align*}
Combining this and the first inequality by union bound, we establish the second inequality.
\end{proof}

\subsection{Supporting Lemma}

\begin{lemma}[Metric entropy, \citet{wainwright2019high}]\label{lem:packing}
Let $\|\cdot\|$ denote the Euclidean norm on $\mathbb{R}^d$ and let $\mathbb{B}$ be the unit balls (i.e., $\mathbb{B} = \{\theta \in \mathbb{R}^d | \|\theta\| \leq 1\}$). Then the $\delta$-covering number of $\mathbb{B}$ in the $\|\cdot\|$-norm obeys the bounds
\begin{align*}
d \log\left(\frac{1}{\delta}\right) \leq \log N(\delta; \mathbb{B}, \|\cdot\|) \leq d \log\left(1 + \frac{2}{\delta}\right).
\end{align*}
\end{lemma}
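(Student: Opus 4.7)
The plan is to prove both inequalities by classical volume-comparison arguments in $\mathbb{R}^d$, leveraging the fact that the Euclidean ball scales as $\mathrm{vol}(r\mathbb{B}) = r^d \mathrm{vol}(\mathbb{B})$ for every $r \geq 0$. No probabilistic input is needed; the lemma is purely a statement about the geometry of $(\mathbb{R}^d,\|\cdot\|)$.

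For the lower bound $\log N(\delta;\mathbb{B},\|\cdot\|) \geq d\log(1/\delta)$, I would proceed by observing that if $\{x_1,\dots,x_N\}$ is a $\delta$-covering of $\mathbb{B}$, then the translated balls $x_i + \delta\mathbb{B}$ cover $\mathbb{B}$. Taking Lebesgue volumes and applying subadditivity gives $\mathrm{vol}(\mathbb{B}) \leq \sum_{i=1}^{N} \mathrm{vol}(\delta\mathbb{B}) = N\delta^d\,\mathrm{vol}(\mathbb{B})$, hence $N \geq \delta^{-d}$, and taking logarithms yields the claim.

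For the upper bound $\log N(\delta;\mathbb{B},\|\cdot\|) \leq d\log(1+2/\delta)$, I would first use the standard inequality $N(\delta) \leq M(\delta)$, where $M(\delta)$ is the $\delta$-packing number: any maximal $\delta$-packing is automatically a $\delta$-covering, since a point of $\mathbb{B}$ lying at distance greater than $\delta$ from every packing element could be added to extend the packing, contradicting maximality. To bound $M$, let $\{x_1,\dots,x_M\} \subset \mathbb{B}$ be a maximal $\delta$-packing. By the triangle inequality the open balls $x_i + (\delta/2)\mathbb{B}$ are pairwise disjoint, and since $\|x_i\| \leq 1$ they all lie inside $(1+\delta/2)\mathbb{B}$. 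Comparing the total volume of the disjoint union with that of the enclosing ball, $M\,(\delta/2)^d\,\mathrm{vol}(\mathbb{B}) \leq (1+\delta/2)^d\,\mathrm{vol}(\mathbb{B})$, so $M \leq (1+2/\delta)^d$, and the stated logarithmic bound follows.

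Neither step carries a genuine technical obstacle: each relies only on scaling of Lebesgue measure in $\mathbb{R}^d$ and the elementary maximality argument relating packings and coverings. The only modest care required is the choice of packing radius $\delta/2$ rather than $\delta$; this is exactly what keeps the shrunken balls disjoint while ensuring the inflated enclosing ball has radius only $1+\delta/2$, producing the factor $1+2/\delta$ in the claimed upper bound.
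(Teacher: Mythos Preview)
Your proposal is correct and is precisely the standard volume-comparison argument. The paper does not actually prove this lemma; it is stated as a supporting result with a citation to \citet{wainwright2019high}, and your argument is exactly the textbook proof found there (the lower bound via covering-volume comparison, the upper bound via the packing--covering inequality $N(\delta)\leq M(\delta)$ followed by a disjoint-ball volume count).
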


\begin{lemma}[Fano's inequality, \citet{cover1999elements}]\label{lem:fano}
When $\theta$ is uniformly distributed over the index set $[M]$, then for any estimator $\hat \theta$ such that $\theta \rightarrow Z \rightarrow \hat \theta$
\begin{align*}
\mathbb{P}[\hat \theta(Z) \neq \theta] \geq 1 - \frac{I(Z; \theta) + \log 2}{\log M}.
\end{align*}
\end{lemma}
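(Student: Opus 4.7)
The plan is to derive the minimax lower bound via a Fano-type argument, reducing the problem of selecting $\hat w$ from a noisy validation set to estimating the unknown conditional distribution $\D_{Y|X}$ (parametrized by $\theta_*$). The validation set contains $n_{\val}$ noisy scalar observations of $\theta_*$, so information-theoretically $\hat w$ cannot encode $\theta_*$ more accurately than $\sigma^2 d/n_{\val}$; the remaining work is to show that in the worst case over $\D_{Y|X}$ and $X^{\train}$, this information deficit translates directly into excess test loss.

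First I would specialize to a linear model $y = \theta_*^\top x + e$ with Gaussian noise and pick a hard $X^{\train}$: a fixed design whose optimal weighting $w^*(\theta_*)$ depends on $\theta_*$ in a well-separated way, so that misidentifying $\theta_*$ by $\|\theta - \theta'\| \gtrsim \delta$ forces the weights to concentrate on the wrong training points and incurs $\Omega(\delta^2)$ excess loss on the test set. I would then apply Lemma~\ref{lem:packing} to construct a $\delta$-packing $\{\theta_1,\dots,\theta_M\}$ in the ball of radius $O(R)$ with $\log M \gtrsim d$. Conditional on $X^{\val}$ the validation labels are $Y^{\val} \sim \N(X^{\val}\theta_*, \sigma^2 I)$, so the pairwise KL is $\tfrac{1}{2\sigma^2}\|X^{\val}(\theta_k-\theta_{k'})\|^2$; on the event $\|X^{\val}\|_{\mathrm{op}}^2 \leq c'_1 n_{\val}$ (which holds with probability $1-\exp(-c_1 n_{\val}/R^2)$ by standard matrix concentration for $R$-bounded rows) the mutual information is bounded by $n_{\val} R^2 \delta^2/\sigma^2$. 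Choosing $\delta^2 \asymp \sigma^2 d/(n_{\val} R^2)$ balances $I \leq \tfrac12 \log M$, and Lemma~\ref{lem:fano} then produces the in-expectation bound $\E_{X^{\test}}[\cL(\hat w) - \cL(w^*)] \geq c_2 \sigma^2 d/n_{\val}$ stated in part~(i).

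For part~(ii) the task is to upgrade the expectation over $X^{\test}$ to a high-probability statement for a single random test set of size $m$. The excess loss decomposes as $\tfrac{1}{m}\sum_{i\le m}(x_i^{\test})^\top [\Var(\hat\theta(\hat w)) - \Var(\hat\theta(w^*))] x_i^{\test}$, a sum of i.i.d.\ quadratic forms in test covariates. The fourth-moment assumption $\lambda(\E[x^{\otimes 4}]) \le \kappa \cdot \lambda(\E[x^{\otimes 2}]^{\otimes 2})$ controls the variance of each quadratic form by a constant multiple of its squared mean, so applying the Paley--Zygmund inequality to the empirical average yields a constant-fraction-of-expectation lower bound with probability at least $1 - O(\kappa/(\kappa+m))$. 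A union bound over the Fano event, the validation-concentration event, and this Paley--Zygmund event delivers the $0.99 - \exp(-c_1 n_{\val}/R^2) - c_3\kappa/(\kappa+m)$ probability in the statement. The main obstacle, which I would tackle first, is the packing construction in the second paragraph: we need an $X^{\train}$ for which the map $\theta \mapsto w^*(\theta)$ spreads the packing apart in ``$\cL$-distance'' while keeping the induced validation distributions $\N(X^{\val}\theta,\sigma^2 I)$ close in KL. A standard route is to choose $X^{\train}$ so that different $\theta_k$ single out disjoint ``best'' training subsets, making the two requirements simultaneously satisfiable.
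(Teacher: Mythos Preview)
Your proposal is not a proof of the stated lemma at all. The statement is Fano's inequality itself --- a classical result cited from \citet{cover1999elements} --- and the paper offers no proof of it; it merely records it among the supporting lemmas. A proof of Fano's inequality proceeds by writing $\log M = H(\theta) = I(\theta;\hat\theta) + H(\theta\mid\hat\theta)$, invoking the data-processing inequality $I(\theta;\hat\theta)\le I(\theta;Z)$, and bounding $H(\theta\mid\hat\theta)\le h(P_e)+P_e\log(M-1)\le \log 2 + P_e\log M$ where $P_e=\PP[\hat\theta\neq\theta]$; rearranging gives the claim. None of this appears in your write-up.

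What you have actually sketched is a proof of Theorem~\ref{thm:data-shapley}, which \emph{uses} Lemma~\ref{lem:fano} --- you even invoke it by name (``Lemma~\ref{lem:fano} then produces the in-expectation bound''). For that theorem your outline is broadly aligned with the paper's argument: linear Gaussian model, packing via Lemma~\ref{lem:packing}, KL bound of order $n_{\val}\delta^2/\sigma^2$, Fano for part~(i), then the fourth-moment hypothesis plus Paley--Zygmund (Lemma~\ref{lem:paley-zygmund}) for part~(ii). The one substantive departure is that the paper does not build an adversarial $X^{\train}$ separating the maps $\theta\mapsto w^*(\theta)$; it works directly with $\|(\hat\theta(\hat w)-\theta^*)^\top X^{\val}\|_2^2$ and transfers to the test norm $\|\cdot\|_\Sigma$ via the matrix-Chernoff event $\tfrac12\Sigma \preceq \tfrac{1}{n_{\val}}X^{\val}(X^{\val})^\top \preceq 2\Sigma$ from Lemma~\ref{lem:matrix-chernoff}, which is what supplies the $\exp(-c_1 n_{\val}/R^2)$ probability. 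But all of this is commentary on the wrong theorem; for the lemma actually in question your proposal is simply off target.
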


\begin{lemma}[Matrix-Chernoff bound, \citet{tropp2012user}]\label{lem:matrix-chernoff}
Consider an independent sequence $\{X_i\}_{i=1}^k$ of random, self-adjoint matrices in $M_n$ satisfying $X_i \geq 0$ and $\lambda_{\text{max}}(X_i) \leq R$ almost surely, for each $i \in \{1, \ldots, k\}$. Define
\begin{align*}
\mu_{\text{min}} &:= \lambda_{\text{min}}\left(\sum_{i=1}^k \mathbb{E}X_i\right), \\
\mu_{\text{max}} &:= \lambda_{\text{max}}\left(\sum_{i=1}^k \mathbb{E}X_i\right).
\end{align*}
Then,
\begin{align*}
\mathbb{P}\left\{\lambda_{\text{max}}\left(\sum_{i=1}^k X_i\right) \geq (1 + \delta)\mu_{\text{max}}\right\} &< n\left(\frac{\mathrm{e}^\delta}{(1 + \delta)^{1+\delta}}\right)^{\frac{\mu_{\text{max}}}{R}} \quad \text{for } \delta \geq 0; \\
\mathbb{P}\left\{\lambda_{\text{min}}\left(\sum_{i=1}^k X_i\right) \leq (1 - \delta)\mu_{\text{min}}\right\} &< n\left(\frac{\mathrm{e}^{-\delta}}{(1 - \delta)^{1-\delta}}\right)^{\frac{\mu_{\text{min}}}{R}} \quad \text{for } \delta \in [0, 1].
\end{align*}
\end{lemma}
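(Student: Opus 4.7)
The plan is to follow the Ahlswede--Winter--Tropp Laplace-transform strategy, which lifts scalar Chernoff bounding to self-adjoint matrices via Lieb's concavity theorem. The argument breaks into four pieces: a matrix Markov inequality, a subadditive master bound for the matrix cumulant generating function, a scalar secant inequality lifted through the functional calculus, and a one-dimensional optimization over the tilting parameter.

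For the upper tail, fix $\theta > 0$ and set $Y = \sum_{i=1}^k X_i$. Applying the spectral mapping theorem to $\theta Y$ together with $\lambda_{\max}(A) \leq \operatorname{tr}(A)$ for $A \succeq 0$ and scalar Markov, I would first obtain
\[
\mathbb{P}\!\left\{\lambda_{\max}(Y) \geq t\right\} \;\leq\; e^{-\theta t}\,\mathbb{E}\operatorname{tr}\exp(\theta Y).
\]
The central analytical step is the subadditive master bound
\[
\mathbb{E}\operatorname{tr}\exp\!\Bigl(\sum_{i=1}^k \theta X_i\Bigr) \;\leq\; \operatorname{tr}\exp\!\Bigl(\sum_{i=1}^k \log\mathbb{E}\,e^{\theta X_i}\Bigr),
\]
obtained by inductively applying Lieb's concavity theorem on $A \mapsto \operatorname{tr}\exp(H + \log A)$ together with Jensen's inequality against the conditional expectations, peeling off one independent summand at a time. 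To control each matrix m.g.f., I exploit the a.s.\ bound $0 \preceq X_i \preceq R\,I$: convexity of $x \mapsto e^{\theta x}$ on $[0,R]$ gives the secant bound $e^{\theta x} \leq 1 + \tfrac{e^{\theta R} - 1}{R}x$, which lifts through the functional calculus to $e^{\theta X_i} \preceq I + \tfrac{e^{\theta R}-1}{R}X_i$; taking expectations, invoking operator monotonicity of $\log$, and using $\log(I+A) \preceq A$ for $A \succeq 0$ yields $\log\mathbb{E}\,e^{\theta X_i} \preceq \tfrac{e^{\theta R}-1}{R}\,\mathbb{E}X_i$.

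Summing and applying $\operatorname{tr}\,e^A \leq n\,e^{\lambda_{\max}(A)}$ together with monotonicity of $\operatorname{tr}\exp$ gives
\[
\mathbb{P}\!\left\{\lambda_{\max}(Y) \geq t\right\} \;\leq\; n\exp\!\left(-\theta t + \tfrac{e^{\theta R}-1}{R}\mu_{\max}\right).
\]
Setting $t = (1+\delta)\mu_{\max}$ and optimizing at $\theta_\star = R^{-1}\log(1+\delta)$ recovers the stated upper-tail bound, since the exponent evaluates to $\tfrac{\mu_{\max}}{R}[\delta - (1+\delta)\log(1+\delta)]$. The lower tail on $\lambda_{\min}$ follows from the symmetric recipe with $e^{-\theta Y}$ in place of $e^{\theta Y}$: the chord bound $e^{-\theta x} \leq 1 - \tfrac{1-e^{-\theta R}}{R}x$ and $\log(I - A) \preceq -A$ yield $\log\mathbb{E}\,e^{-\theta X_i} \preceq -\tfrac{1-e^{-\theta R}}{R}\mathbb{E}X_i$, and using $\lambda_{\max}(-\sum \mathbb{E}X_i) = -\mu_{\min}$ followed by optimization at $\theta_\star = -R^{-1}\log(1-\delta) > 0$ produces the claimed expression.

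The main obstacle is the subadditivity master bound. Scalar Chernoff simply factors the m.g.f.\ multiplicatively under independence, but noncommutativity of matrix exponentials blocks this route: in general $e^{A+B} \neq e^A e^B$ and $\mathbb{E}\,e^{A+B} \not\leq \mathbb{E}\,e^A \cdot \mathbb{E}\,e^B$ in any useful matrix sense. The resolution, and the genuine engine of the proof, is Lieb's theorem on joint concavity of $A \mapsto \operatorname{tr}\exp(H + \log A)$ applied inductively across the $k$ summands. Once that step is in hand, the secant chord bound on $[0,R]$, operator monotonicity of $\log$, and the scalar optimization over $\theta$ are all routine computations.
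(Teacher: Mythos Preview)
Your proof is correct and follows precisely the Laplace-transform argument of \citet{tropp2012user}, which is the source cited for this lemma. Note that the paper itself does not prove this statement: it is listed as a supporting lemma with a citation and used as a black box in the proof of Theorem~\ref{thm:data-shapley}, so there is no ``paper's own proof'' to compare against beyond the original reference you have reconstructed.
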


\begin{lemma}[Paley–Zygmund inequality, \citet{paley1932note}]\label{lem:paley-zygmund}
If $Z$ is a random variable with finite variance and $Z \geq 0$ almost surely, then
\begin{align*}
\mathbb{P}(Z > \theta ~ \mathbb{E}[Z]) \geq (1 - \theta)^2 \frac{\mathbb{E}[Z]^2}{\mathbb{E}[Z^2]}, ~\forall \theta \in (0, 1).
\end{align*}
\end{lemma}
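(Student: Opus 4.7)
The plan for (i) is a Fano-based minimax reduction. I would first pick a worst-case training set $X^{\train}$ so that the training data carries essentially no information about $\theta_*$ in the directions that matter for the test distribution (for instance, by making $X^{\train}$ rank-deficient or orthogonal to $\supp(\D_X)$), which forces $\hat w$ to depend only on $(X^{\val}, Y^{\val})$ through an implicit estimate of $\theta_*$. Then I would embed a standard linear regression lower bound: invoking Lemma~\ref{lem:packing}, construct a $\delta$-packing $\{\theta_1,\dots,\theta_M\} \subset B_r^d$ with $M \ge 2^d$, and realize $\D_{Y|X}$ as $y = \theta_k^\top x + \mathcal{N}(0,\sigma^2)$. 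Since $\|x\| \le R$, the mutual information between $(X^{\val}, Y^{\val})$ and the hidden index $k$ is at most $O(n_{\val} r^2 R^2/\sigma^2)$, and Lemma~\ref{lem:fano} forces a constant failure probability whenever $r^2 \asymp \sigma^2 d/(n_{\val} R^2)$. Translating back to the prediction task, the excess risk $\cL(\hat w)-\cL(w^*)$ is bounded below by $\|\hat\theta-\theta_*\|^2_{\Sigma_{\test}}$ on the event that the empirical covariance of the validation features is comparable to its population version; applying the Matrix-Chernoff bound (Lemma~\ref{lem:matrix-chernoff}) to $\sum_j x_j^{\val}(x_j^{\val})^\top$ with spectral cap $R^2$ gives exactly the claimed $1-\exp(-c_1 n_{\val}/R^2)$ probability of that event, delivering the in-expectation lower bound $c_2\sigma^2 d/n_{\val}$.

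For (ii), the goal is to upgrade this in-expectation lower bound to a constant-probability statement over $X^{\test}$. Set $Z := \cL(\hat w) - \cL(w^*) \ge 0$; part (i) provides $\E_{X^{\test}}[Z] \gtrsim \sigma^2 d/n_{\val}$. I would apply Paley-Zygmund (Lemma~\ref{lem:paley-zygmund}) with $\theta = 1/2$ to obtain $\PP(Z > \tfrac12\E[Z]) \ge \tfrac14\,(\E[Z])^2/\E[Z^2]$. Because the square-loss excess risk is a quadratic form in $\hat\theta-\theta_*$ averaged over $m$ i.i.d.\ test points, $\E[Z^2]$ involves the fourth moment of $x^{\test}$, and the hypercontractivity hypothesis $\lambda(\E[x^{\otimes 4}]) \le \kappa\,\lambda(\E[x^{\otimes 2}]^{\otimes 2})$ is precisely what bounds that fourth moment in terms of the squared second moment. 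The $1/m$ variance reduction from averaging over $m$ test points then yields $(\E[Z])^2/\E[Z^2] \gtrsim 1 - c_3\kappa/(\kappa+m)$. Combining with Paley-Zygmund and the high-probability event from part (i) produces the claimed $0.99 - \exp(-c_1 n_{\val}/R^2) - c_3\kappa/(\kappa+m)$ probability bound.

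The main obstacle will be justifying that the reduction in (i) is sharp \emph{regardless of the training algorithm}: one must show that for the adversarial $X^{\train}$, the platform's $\hat w$ inherits the Gaussian location model's $\sigma^2 d/n_{\val}$ minimax error at the level of prediction risk and not merely as parameter error. A clean way to do this is to arrange the packing so that the prediction at a designated $x^{\test}$ is an affine functional of $\theta_*$, reducing the task to one-dimensional signal-in-Gaussian-noise where the error transfer is immediate. The second delicate step is the moment bookkeeping driving Paley-Zygmund: one must expand $Z^2$, apply the hypothesis on $\lambda(\E[x^{\otimes 4}])$ to each contracted index, and then track the independence of the $m$ test points so that the stated $\kappa/(\kappa+m)$ factor emerges with matching constants.
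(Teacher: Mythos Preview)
Your proposal addresses the wrong statement. The target here is Lemma~\ref{lem:paley-zygmund}, the Paley--Zygmund inequality itself, which the paper simply cites from \citet{paley1932note} and does not prove. What you have written is instead a proof plan for Theorem~\ref{thm:data-shapley}, the minimax lower bound on validation-based data selection, in which Paley--Zygmund appears only as one ingredient for part~(ii).

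For the record, the Paley--Zygmund inequality has a two-line proof: split $\E[Z] = \E\!\left[Z\,\mathbbm{1}_{\{Z \le \theta\E[Z]\}}\right] + \E\!\left[Z\,\mathbbm{1}_{\{Z > \theta\E[Z]\}}\right]$, bound the first term by $\theta\,\E[Z]$, and apply Cauchy--Schwarz to the second to obtain $\E\!\left[Z\,\mathbbm{1}_{\{Z > \theta\E[Z]\}}\right] \le \sqrt{\E[Z^2]\,\PP(Z > \theta\E[Z])}$; rearranging yields the claim.

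As a side remark, the sketch you wrote for Theorem~\ref{thm:data-shapley} does line up with the paper's own argument: the paper uses the packing lemma plus Fano for the minimax step, Matrix--Chernoff for the $1-\exp(-c_1 n_{\val}/R^2)$ concentration of the validation covariance against $\Sigma$, and then the fourth-moment hypothesis combined with Paley--Zygmund for the constant-probability statement in part~(ii). So the content is essentially right, just attached to the wrong label.
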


\section{Convergence rate of Frank-Wolfe (Proof of Theorem~\ref{thm:herding}}\label{app:convergence}
\textbf{Setup.}
We mostly follow the proof technique in \citet{bach2012herding} and \citet{jaggi2013revisiting}. Recall the optimization problem for the optimal design loss Eq.~\eqref{eq:design_opt}:
\begin{align*}
    \min_{\mathbf{w} \in \D} \mathcal{L}(\mathbf{w}).
\end{align*}
Here, $\D$ is the scaled simplex defined by the constraints $\mathbf{w} \in \R_{\geq 0}^n$ and $\sumj c_j w_j \leq B$. The Frank-Wolfe update on this function is then: For $t = 1,2,\dots$, repeatedly perform the following steps
\begin{itemize}
    \item Compute $\mathbf{s}_t = \arg\max_{\mathbf{u} \in \D} \langle \nabla \mathcal{L}(\mathbf{w}_t), \mathbf{w}_t-\mathbf{u} \rangle$.
    \item update $\mathbf{w}_{t+1} = (1-\alpha_t) \mathbf{w}_t + \alpha_t \mathbf{s}_t$
\end{itemize}
Note that this update procedure is identical to the updates in Eq.~\ref{eq:FW-herding-update}.
We can then define the duality gap as
\begin{align*}
    g(\mathbf{w}) = \sup_{\mathbf{s} \in \D} \langle \mathbf{w}-\mathbf{s},\nabla \mathcal{L}(\mathbf{w})\rangle.
\end{align*}
We also define the curvature constant
\begin{align*}
    C_l = \sup_{\substack{\mathbf{s},\mathbf{w}\in \D \\ \gamma \in (0,1) \\ \mathbf{u} = (1-\gamma)\mathbf{w} + \gamma \mathbf{s}}} \frac{2}{\gamma^2}\left(\mathcal{L}(\mathbf{u}) - \mathcal{L}(\mathbf{w}) - \langle \mathbf{u}-\mathbf{w}, \nabla \mathcal{L}(\mathbf{w})\rangle \right).
\end{align*}
We assume that the curvature constant is finite, i.e., $C_l < \infty$. This is true for $\mathcal{L}(\mathbf{w})$ as long as both the algorithm and the true optimum are bounded away from the boundary of $\D$ --- see detailed discussions on this in \citet{ahipacsaouglu2013modified}. A better analysis might be able to avoid this assumption, e.g.~\citet{zhao2023analysis} use certain homogeneity properties of $\mathcal{L}(\mathbf{w})$ to derive better assumption-free convergence rates for the FW method on D-optimal experiment design. We leave the question of adapting these results to our setting (V-optimal experiment design) for a challenging future work.

\begin{lemma}[Lemma 5, \cite{jaggi2013revisiting}]\label{lem:one_step_fw}
For any $\alpha \in (0,1)$,
\begin{align*}
    \mathcal{L}(\mathbf{w}_{t+1}) \leq \mathcal{L}(\mathbf{w}_{t}) - \alpha_t g(\mathbf{w}_t) + \frac{\alpha_t^2}{2}C_l.
\end{align*}
\end{lemma}

\begin{theorem}
In the Frank-Wolfe algorithm, our update algorithm in Eq.~\ref{eq:FW-herding-update} uses $\alpha_t = \frac{1}{t+1}$. For this, we have
\begin{align*}
    \mathcal{L}(\mathbf{w}_{t}) \leq \min_{\mathbf{w} \in \D}\mathcal{L}(\mathbf{w}) + \frac{C_l(1 + \log t)}{2t}.
\end{align*}
\end{theorem}
\begin{proof}
Define $h(\mathbf{w}) = \mathcal{L}(\mathbf{w})  - \min_{\mathbf{w} \in \D}\mathcal{L}(\mathbf{w})$. 
Using Lemma~\ref{lem:one_step_fw},
\begin{align*}
    h(\mathbf{w}_t) \leq &~ h(\mathbf{w}_{t-1}) - \alpha_{t-1} g(\mathbf{w}_{t-1}) + \frac{\alpha_{t-1}^2}{2}C_l\\
    \leq &~ h(\mathbf{w}_{t-1}) - \alpha_{t-1} h(\mathbf{w}_{t-1}) + \frac{\alpha_{t-1}^2}{2}C_l\\
    = &~ (1-\alpha_{t-1})h(\mathbf{w}_{t-1}) + \frac{\alpha_{t-1}^2}{2}C_l
\end{align*}
Here we used the convexity of $\mathcal{L}$ as follows: 
\begin{align*}
g(\mathbf{w}_{t-1}) &= \sup_{\mathbf{s} \in \D} \langle \mathbf{w}_{t-1}-\mathbf{s},\nabla \mathcal{L}(\mathbf{w}_{t-1})\rangle\\
&\geq \langle \mathbf{w}_{t-1}- \mathbf{w}^\ast,\nabla \mathcal{L}(\mathbf{w}_{t-1})\rangle\\
&\geq \mathcal{L}(\mathbf{w}_{t-1}) - \min_{\mathbf{w} \in \D}\mathcal{L}(\mathbf{w})\,.
\end{align*}
Continuing our derivation, recall we have
\begin{align*}
    h(\mathbf{w}_t) &\leq (1-\alpha_{t-1})h(\mathbf{w}_{t-1}) + \frac{\alpha_{t-1}^2}{2}C_l\\
    &= \frac{t-1}{t}h(\mathbf{w}_{t-1}) + \frac{1}{2 t^2}C_l
    \quad\leq\quad  \frac{t-2}{t}h(\mathbf{w}_{t-1}) \,.
\end{align*}
Thus we have
\[
t\cdot h(\mathbf{w}_t) \leq (t-1)h(\mathbf{w}_{t-1}) + \frac{C_l}{2t}\,.
\]
Unrolling this recursion, we get
\[
t\cdot h(\mathbf{w}_t) \leq \sum_{k=1}^{t-1} \sum  \frac{C_l}{2k} \leq \frac{C_l(1 + \log t)}{2}\,.
\]
This yields the theorem claim.
\end{proof}
Finally, to finish the proof of Theorem~\ref{thm:herding}, note that adding additional constraints only increases the loss, i.e. 
\[
\min_{\mathbf{w} \in \D} \mathcal{L}(\mathbf{w}) \leq \min_{\mathbf{w} \in \D \text{ and } \mathbf{w} \in \{0,1\}^n} \mathcal{L}(\mathbf{w}).
\]
Hence, we showed a $O(\log t/t)$ approximation to the otherwise intractable combinatorial problem. With additional assumptions on the structure of the loss function, one can even show improved quadratic or even exponential approximations~\citep{damla2008linear,bach2012herding}. Finally, we note that Frank-Wolfe is a well-known method to efficiently approximate the optimal experiment design objective~\cite{wynn1970sequential,fedorov2013theory,ahipacsaouglu2013modified} and this also motivates using this approach in practice~\cite{ahipacsaouglu2013modified,amjad2021optimal}.

\section{Experimental Setup}\label{app:setup}
For each buyer test point, we optimize each selection algorithm over the 1,000 seller datapoints and select the highest value data based on the validation set of 100 datapoints (our method and Data OOB do not use the validation set). For each test point, we train a linear regression model on the selected seller points and report test mean squared error (MSE) on the buyer's data and average test error over 100 buyers.

We benchmarked all results on an Intel Xeon E5-2620 CPU with 40 cores and a Nvidia GTX 1080 Ti GPU.
For baseline implementation of data valuation methods, we use the OpenDataVal package~\cite{jiang2023opendataval} version 1.2.1. We use the default hyperparameter settings for all methods with the exception of Data Shapley (changed 100 Monte-Carlo epochs with 10 models per iteration), Influence Subsample (from 1000 to 500 models), and Data OOB (from 1000 to 500 models) to reduce computational runtime.

For the Gaussian dataset, we generate a regression dataset according to the following Python code:
\begin{lstlisting}[language=Python]
import numpy as np

def get_gaussian_data(num_samples=100, dim=10, noise=0.1, costs=None):
    X = np.random.normal(size=(num_samples, dim))
    X /= np.linalg.norm(X, axis=1, keepdims=True)
    if costs is not None:
        X *= costs
    coef = np.random.exponential(scale=1, size=dim)
    coef *= np.sign(np.random.uniform(low=-1, high=1, size=dim))
    y = X @ coef + noise * np.random.randn(num_samples)
    return dict(X=X, y=y, coef=coef, noise=noise, dim=dim, costs=costs)
\end{lstlisting}

The RSNA Pediatric Bone Age Challenge (2017) dataset~\cite{halabi2019rsna} may be downloaded here \url{https://www.rsna.org/rsnai/ai-image-challenge/rsna-pediatric-bone-age-challenge-2017}. We use the training set for our experiments, resulting in 12,611 images in total. Using the following function, each image was embedded through a pre-trained CLIP ViT-B/32 model.
\begin{lstlisting}[language=Python]
import clip
import torch
from PIL import Image

def embed_images(img_paths, device="cuda"):
    model, preprocess = clip.load("ViT-B/32", device=device)
    inference_func = model.encode_image
    embeddings = []
    with torch.inference_mode():
        for img_path in tqdm(img_paths):
            img = Image.open(img_path)
            embedding = inference_func(preprocess(img)[None].to(device))
            embeddings.append(embedding.cpu())
    return torch.cat(embeddings)
\end{lstlisting}

The Fitzpatrick17K~\cite{groh2021evaluating} can be downloaded from here \url{https://github.com/mattgroh/fitzpatrick17k}. Missing or corrupted images were excluded, resulting in 16,536 total images. Some images were annotated with two separate annotation platforms. We averaged the two skin type ratings for these images, resulting in 12 possible labels (0--6 in 0.5 increments). Each image was embedded in a fashion similar to the RSNA Bone Age dataset. 

The MIMIC dataset~\cite{johnson2016mimic} can be assessed here \url{https://physionet.org/content/mimiciii/1.4/}.
The task is to predict the length of stay (LOS) in the number of days a patient stays in the Intensive Care Unit. The dataset contains 51,036 rows with both real-valued and one-hot-encoded attributes with the following names:
\begin{quote}
    ``LOS'' , ``blood'' , ``circulatory'' , ``congenital'' , ``digestive'' , ``endocrine'' ,
    ``genitourinary'' , ``infectious'' , ``injury'' , ``mental'' , ``misc'' , ``muscular'' ,
    ``neoplasms'' , ``nervous'' , ``pregnancy'' , ``prenatal'' , ``respiratory'' , ``skin'' ,
    ``GENDER'' , ``ICU'' , ``NICU'' , ``ADM\_ELECTIVE'' , ``ADM\_EMERGENCY'' , ``ADM\_NEWBORN'' ,
    ``ADM\_URGENT'' , ``INS\_Government'' , ``INS\_Medicaid'' , ``INS\_Medicare'' ,
    ``INS\_Private'' , ``INS\_Self Pay'' , ``REL\_NOT SPECIFIED'' , ``REL\_RELIGIOUS'' ,
    ``REL\_UNOBTAINABLE'' , ``ETH\_ASIAN'' , ``ETH\_BLACK/AFRICAN AMERICAN'' ,
    ``ETH\_HISPANIC/LATINO'' , ``ETH\_OTHER/UNKNOWN'' , ``ETH\_WHITE'' ,
    ``AGE\_middle\_adult'' , ``AGE\_newborn'' , ``AGE\_senior'' , ``AGE\_young\_adult'' ,
    ``MAR\_DIVORCED'' , ``MAR\_LIFE PARTNER'' , ``MAR\_MARRIED'' , ``MAR\_SEPARATED'' ,
    ``MAR\_SINGLE'' , ``MAR\_UNKNOWN (DEFAULT)'' , ``MAR\_WIDOWED''
\end{quote}
Each attribute was min-max scaled to lie in the range [0, 1].

The DrugLib dataset~\cite{misc_drug_review_dataset_(druglib.com)_461} can be downloaded here \url{https://archive.ics.uci.edu/dataset/461/drug+review+dataset+druglib+com}.
The task is to predict overall patient satisfaction with a drug's side effects and effectiveness on a ten-point scale.
We format each review using the following prompt template to feed GPT-2:
\begin{verbatim}
    Benefits: $BENEFITS_REVIEW
    Side effects: $SIDE_EFFECTS_REVIEW
    Comments: $COMMENTS_REVIEW
\end{verbatim}
where \texttt{\$BENEFITS\_REVIEW} is the corresponding portion of the drug review.
\begin{lstlisting}[language=Python]
import torch
from transformers import GPT2Tokenizer, GPT2Model

def embed_text(text_inputs :list[str], device="cuda"):
    tokenizer = GPT2Tokenizer.from_pretrained(model_name)
    model = GPT2Model.from_pretrained(model_name).to(device)
    embeddings = []
    for x in tqdm(text_inputs):
        inputs = tokenizer(x, return_tensors="pt", truncation=True).to(device)
        with torch.no_grad():
            outputs = model(**inputs)
        embeddings.append(
            outputs.last_hidden_state.mean(dim=1).cpu()
        )
    return torch.cat(embeddings)
\end{lstlisting}


\section{Additional Experiments}\label{app:exp}
\subsection{Budget Results with Heterogeneous Costs}\label{app:budget}

\begin{figure}[H]
    \centering
    \includegraphics[width=\textwidth]{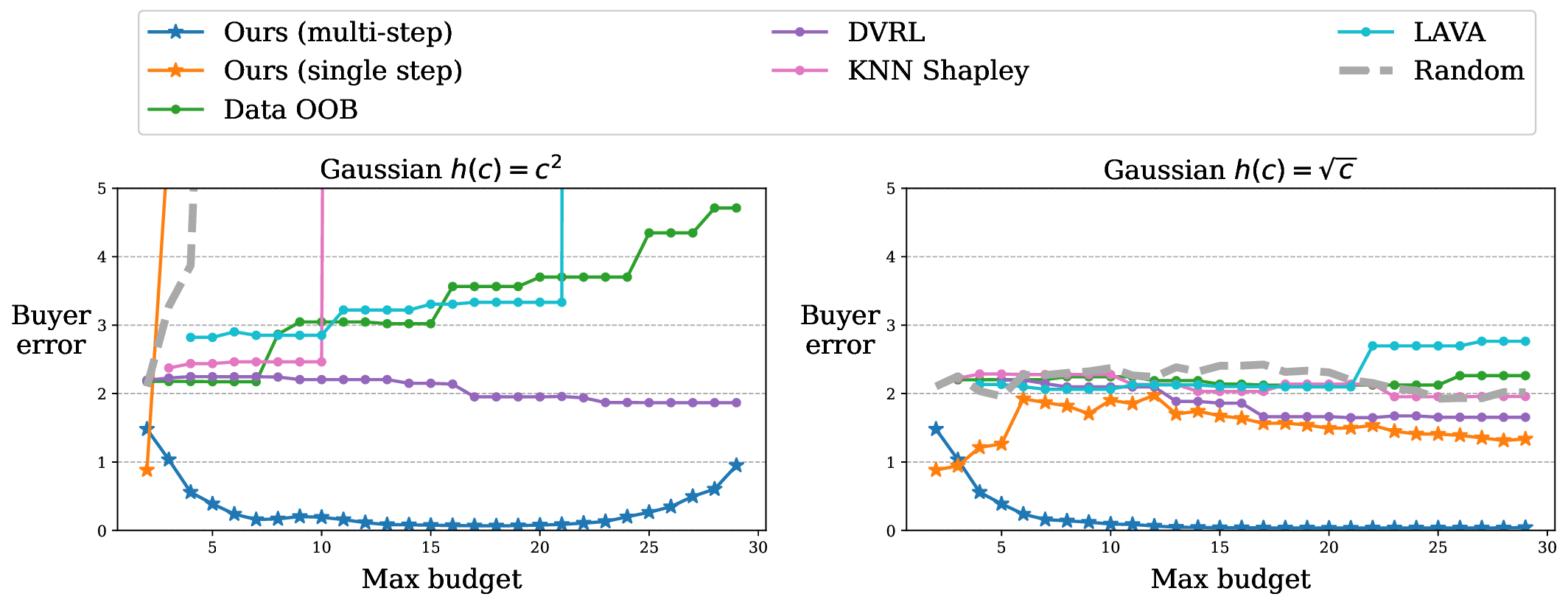}
    \caption{We compare data selection methods on synthetic Gaussian data with 10K seller datapoints under two cost functions. No feature function was used, and results were averaged over 100 buyers. We find that our approach using experimental design (DAVED) is more budget-efficient than other data valuation methods.
    }
    \label{fig:gauss-cost}
\end{figure}

\begin{figure}[H]
    \centering
    \includegraphics[width=\textwidth]{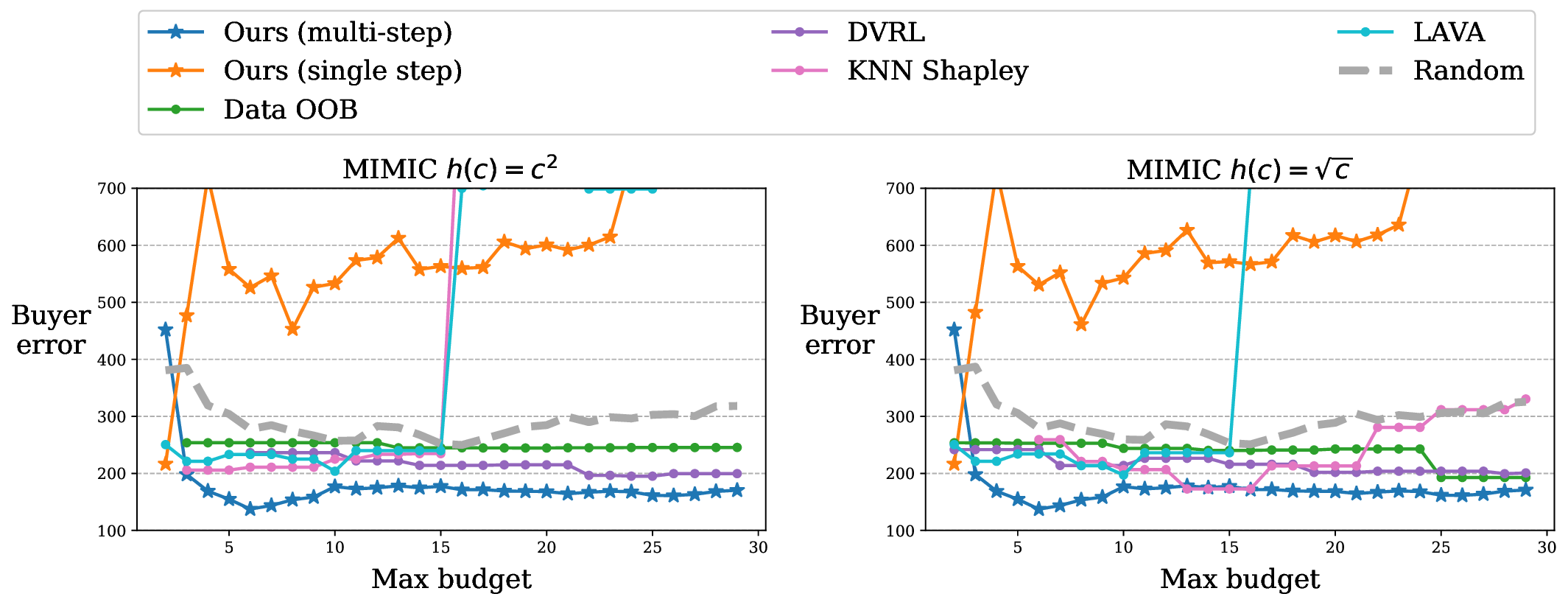}
    \caption{We compare data selection methods on the MIMIC-III dataset with 35,000 seller datapoints under two cost functions. No feature function was used, and results were averaged over 100 buyers. We find that our approach using experimental design (DAVED) is more budget-efficient than other data valuation methods.
    }
    \label{fig:mimic-cost}
\end{figure}

\begin{figure}[H]
    \centering
    \includegraphics[width=\textwidth]{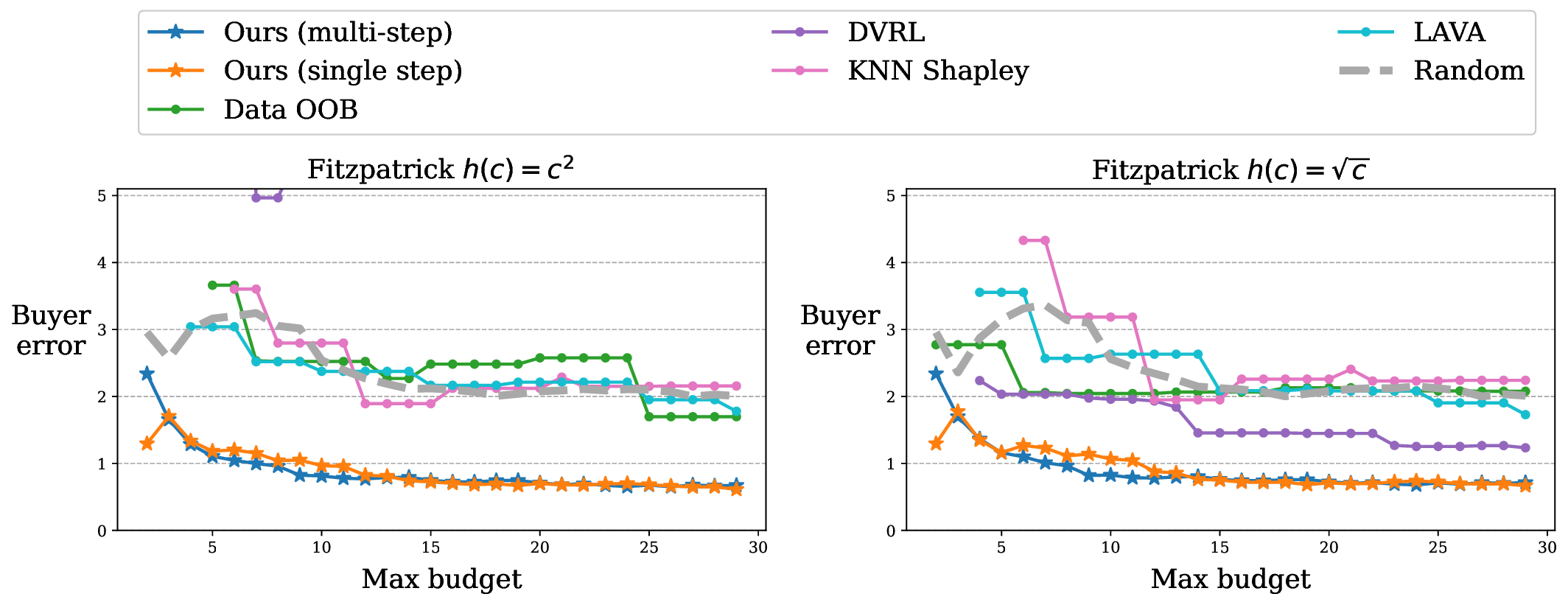}
    \caption{
    We compare data selection methods on the Fitzpatrick dataset with 15,000 seller datapoints under two cost functions. Each image was embedded through CLIP, and the results averaged over 100 buyers. We find that our approach using experimental design (DAVED) is more budget-efficient than other data valuation methods.
    }
    \label{fig:fitz-cost}
\end{figure}

\begin{figure}[H]
    \centering
    \includegraphics[width=\textwidth]{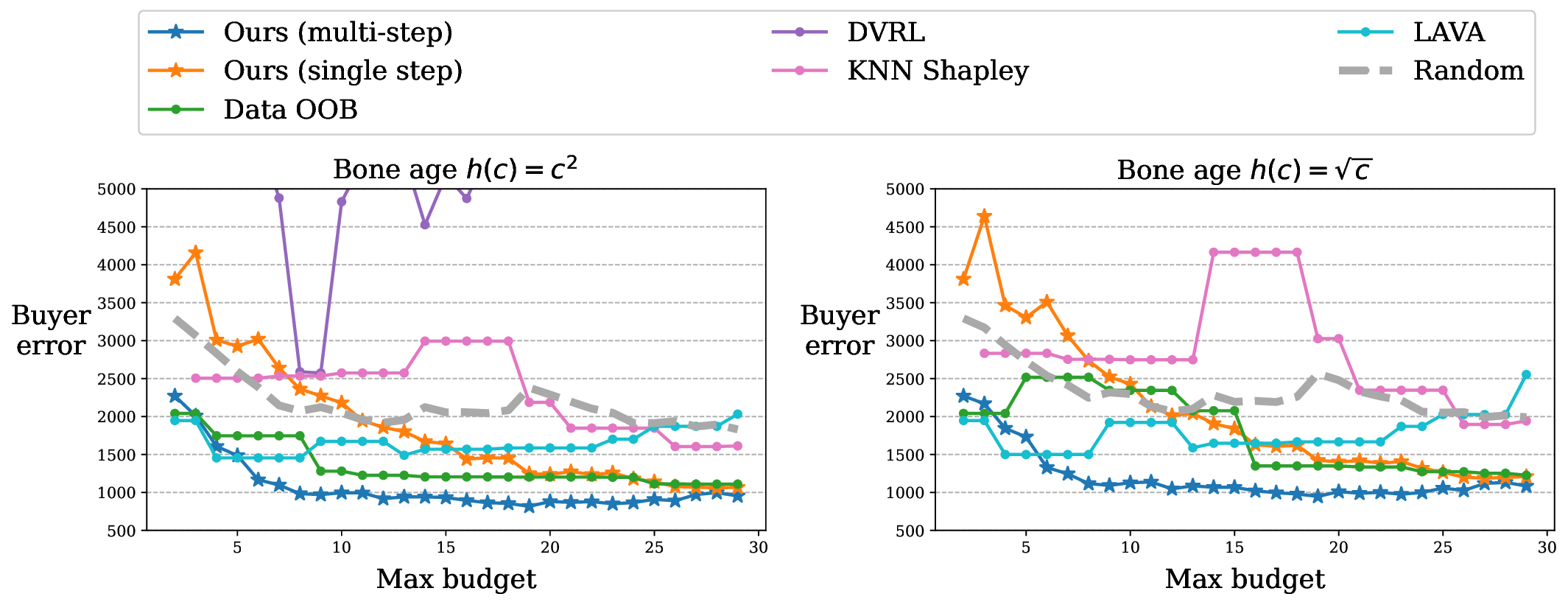}
    \caption{
    We compare data selection methods on the RSNA Bone Age dataset with 12,000 seller datapoints under two cost functions. Each image was embedded through CLIP, and the results averaged over 100 buyers. We find that our approach using experimental design (DAVED) is more budget-efficient than other data valuation methods.
    }
    \label{fig:bone-cost}
\end{figure}

\begin{figure}[H]
    \centering
    \includegraphics[width=\textwidth]{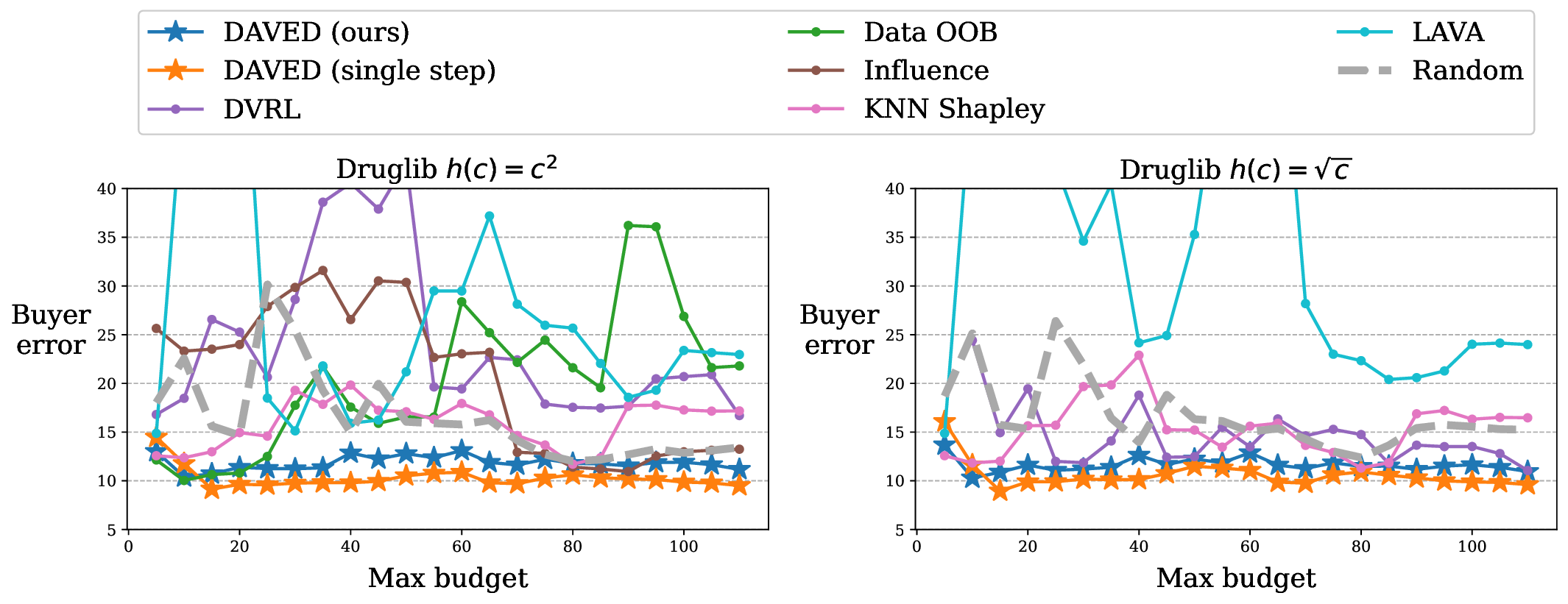}
    \caption{
    We compare data selection methods on the DrugLib reviews dataset with 3,500 seller datapoints under two cost functions. Each image was embedded through GPT2, and the results averaged over 100 buyers. We find that our approach using experimental design (DAVED) is more budget-efficient than other data valuation methods.
    }
    \label{fig:drug-cost}
\end{figure}




\subsection{Regularization}\label{app:reg}

\begin{figure}[H]
    \centering
    \includegraphics[width=0.99\textwidth]{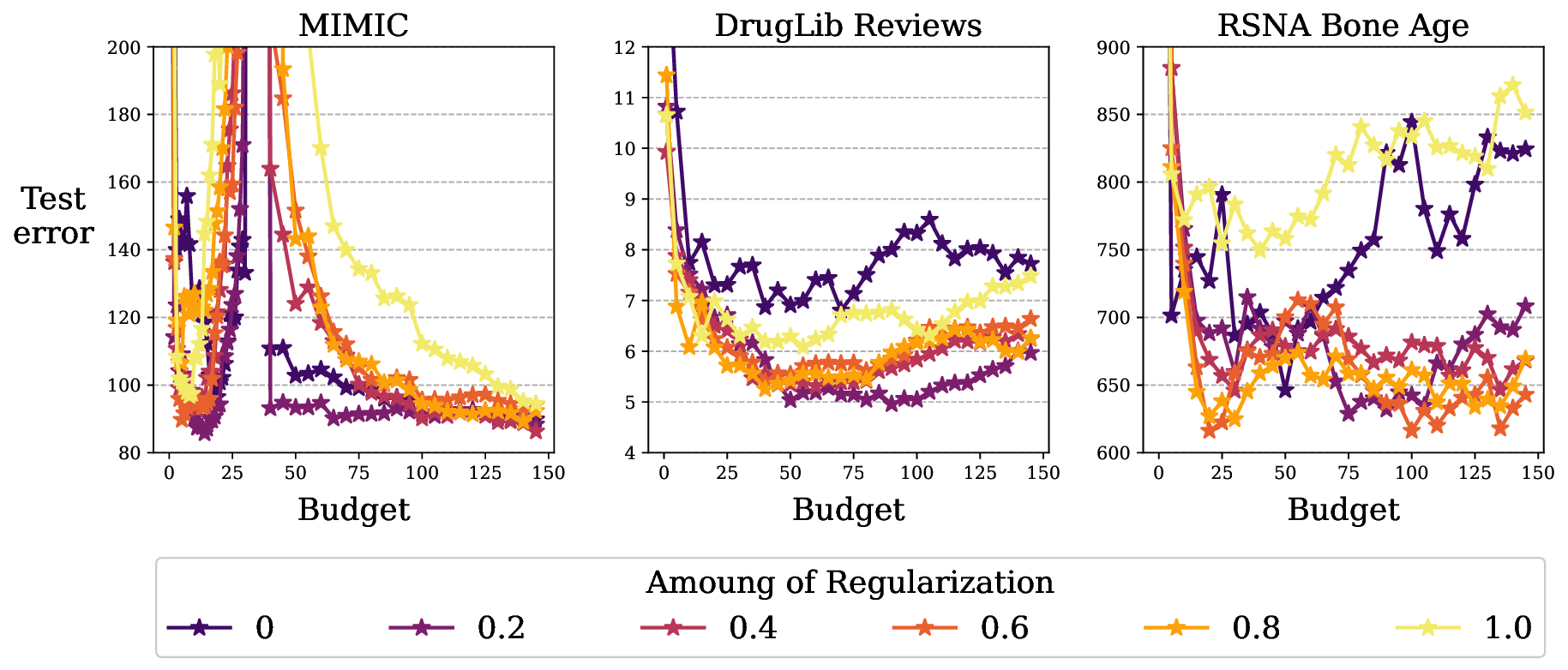}
    \caption{We vary the strength of regularization on the MIMIC, DrugLib, and RSNA Bone Age datasets. In general, a moderate amount can improve performance and stability across budgets. Note that we do not apply regularization in any other experiments.}
    \label{fig:regularization}
\end{figure}

\subsection{Amount of Buyer Data}\label{app:buyer-data}
\begin{figure}[H]
    \centering
    \includegraphics[width=0.99\textwidth]{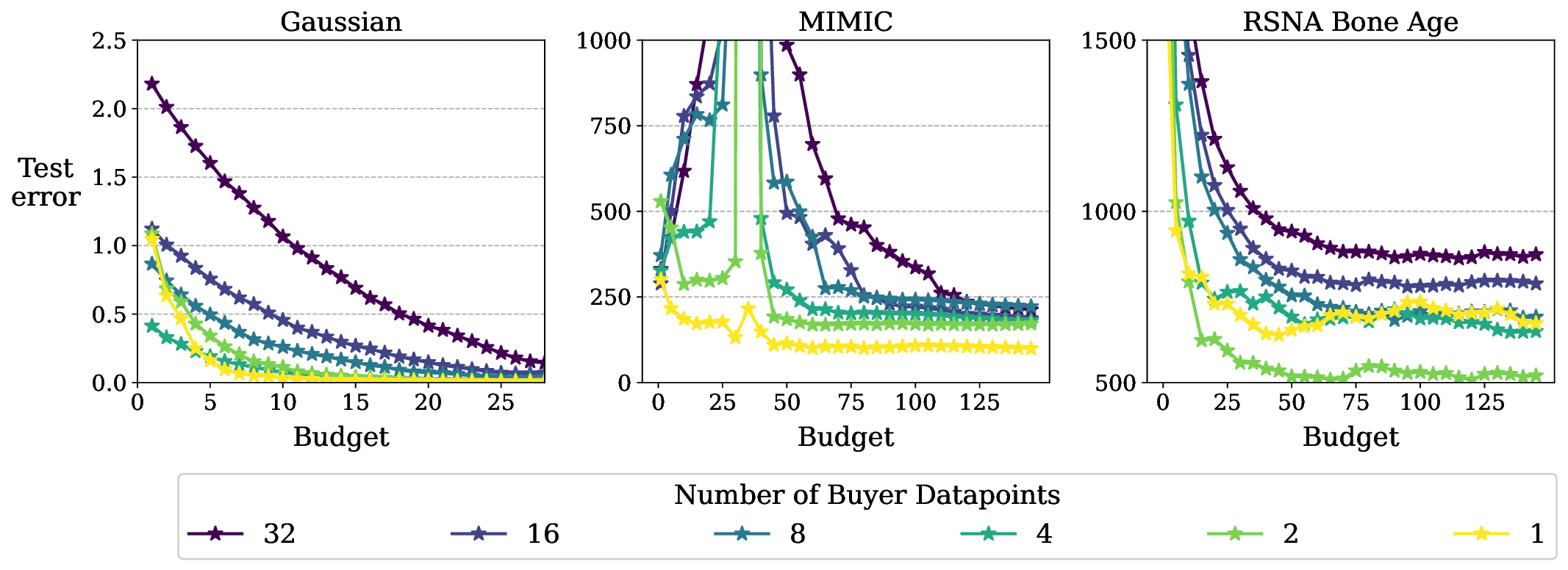}
    \caption{We vary the amount of buyer test datapoints on Gaussian, MIMIC, and RSNA Bone Age datasets. In general, we find that increasing the number of datapoints being optimized results in increased test error.}
    \label{fig:vary-buyer-points}
\end{figure}

\subsection{Number of Iteration Steps}\label{app:steps}
\begin{figure}[H]
    \centering
    \includegraphics[width=0.99\textwidth]{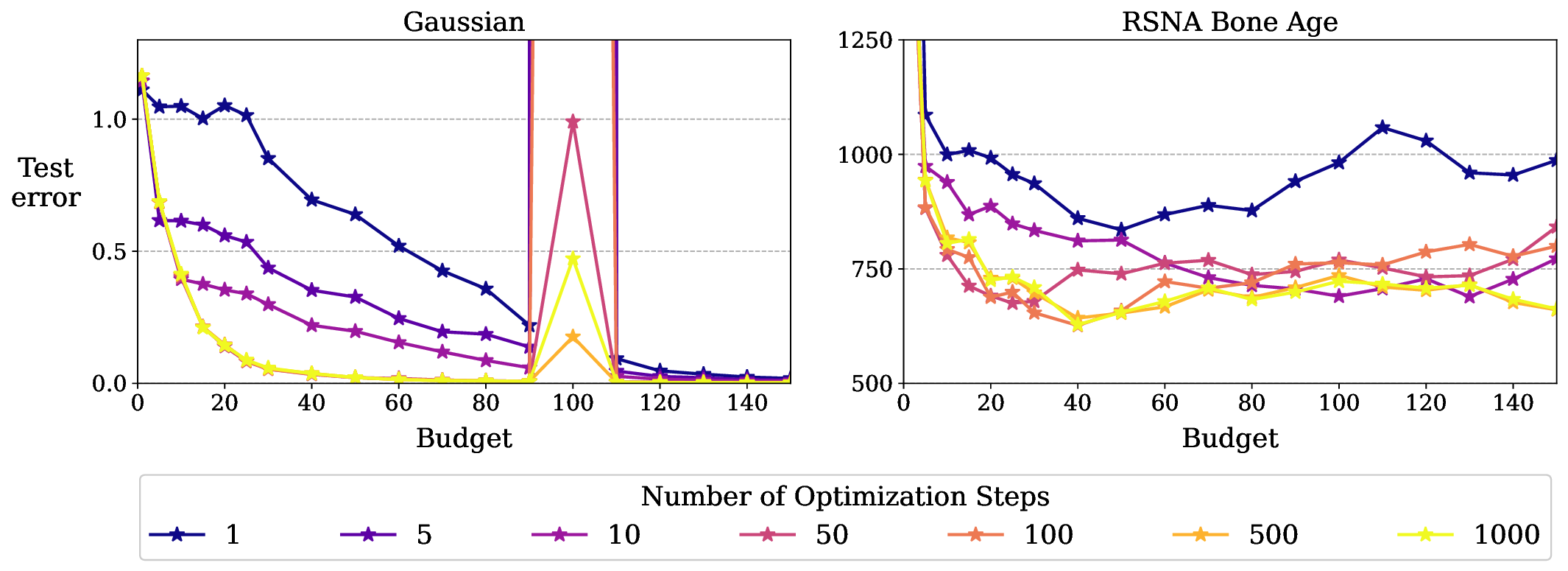}
    \caption{We vary the number of optimization steps on Gaussian and RSNA Bone Age datasets. In general, we recommend 2--5 times the number of optimization steps as the total budget, or the number of datapoints selected in the case where each datapoint has a cost of 1.}
    \label{fig:vary-steps}
\end{figure}

\subsection{Feature Function}\label{app:feature}
\begin{figure}[H]
    \centering
    \includegraphics[width=0.5\textwidth]{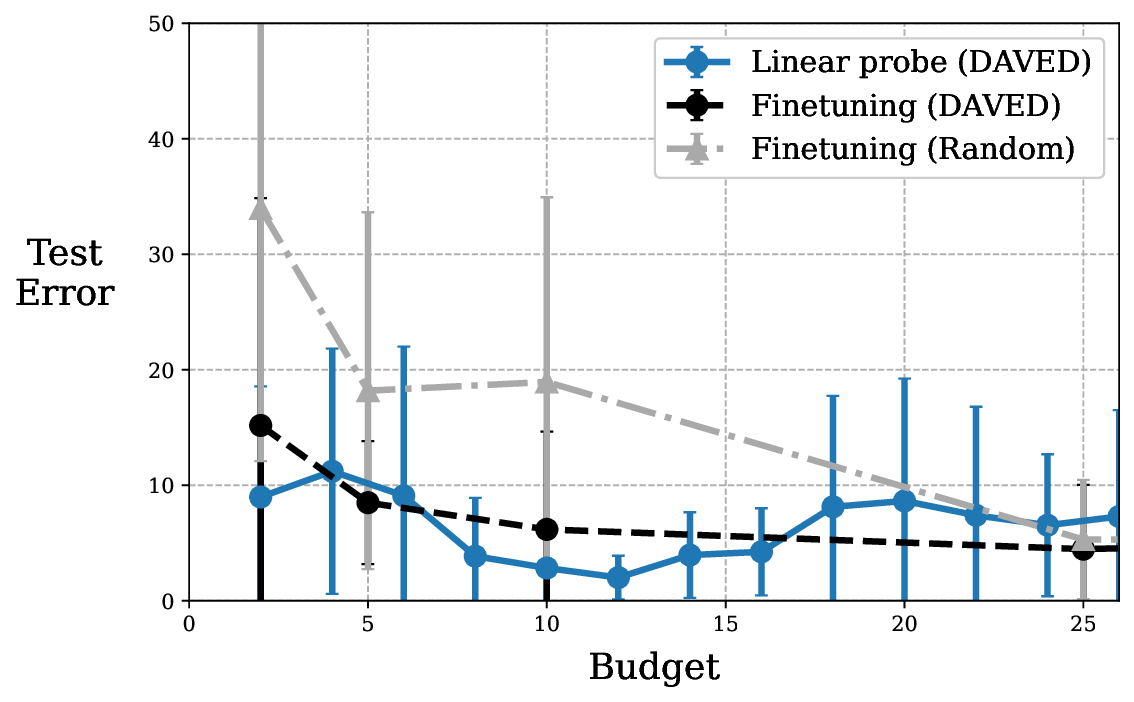}
    \caption{We finetune a Bert model on data selected using DAVED versus randomly selected datapoints on the DrugLib dataset. We find that the performance of fine-tuning corresponds with that of the linear probe, which is used for the other experiments. This provides empirical justification for our choice of feature function.}
    \label{fig:bert-finetune}
\end{figure}

\subsection{Iterative versus Convex optimization}\label{app:convex}
\begin{figure}[H]
    \centering
    \includegraphics[width=0.9\textwidth]{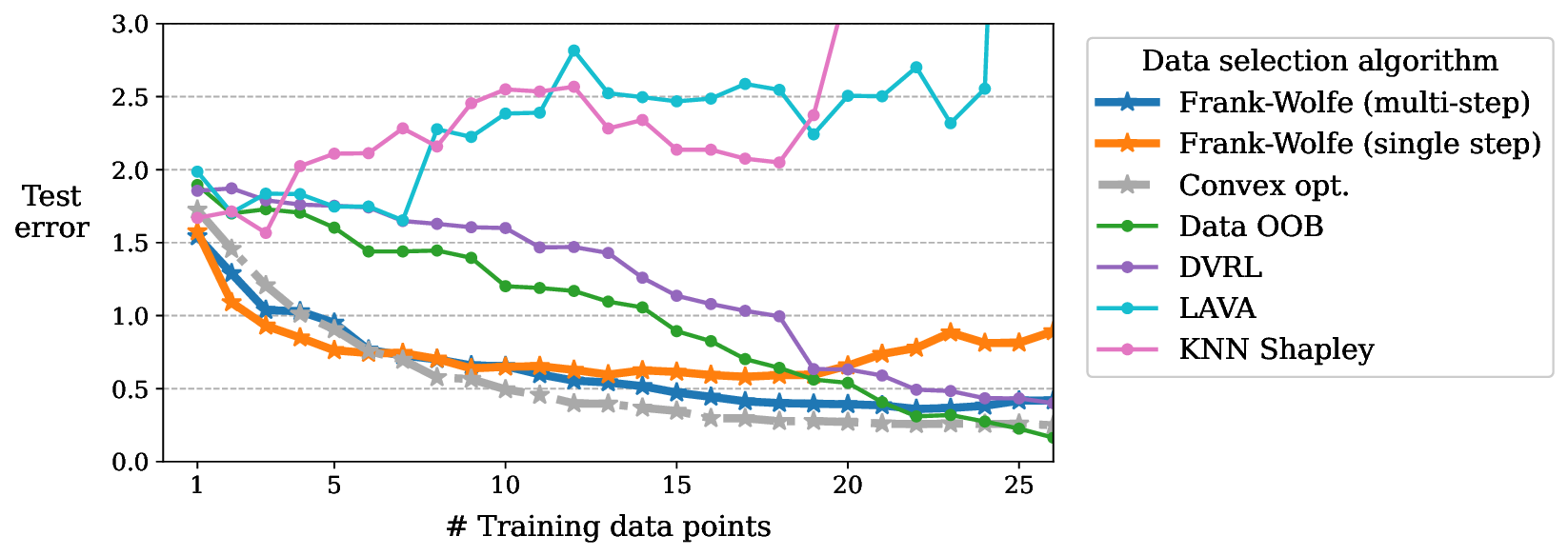}
    \caption{We compare the prediction error of Frank-Wolfe optimization with Convex optimization on 1000 datapoints sampled from 30-dimensional Gaussian. We find that our iterative optimization procedure generally approximates the convex solver in terms of test error.}
    \label{fig:compare-opt-error}
\end{figure}

\begin{figure}[H]
    \centering
    \includegraphics[width=0.7\textwidth]{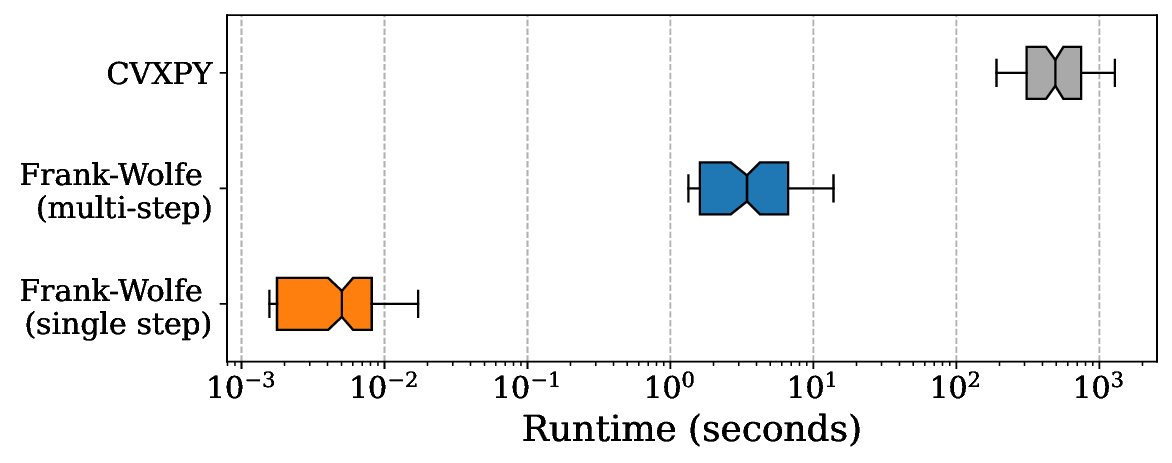}
    \caption{We compare the runtime of Frank-Wolfe optimization with Convex optimization on 1000 datapoints sampled from 30-dimensional Gaussian. We find that our iterative optimization procedure is much faster to optimize than the convex solver.}
    \label{fig:compare-opt-speed}
\end{figure}

\section{Broader Impacts}\label{app:ethics}
    We believe that AI developers face important ethical questions when acquiring data to for AI development. 
    Class-action lawsuits have been filed against several AI companies for their data collection practices surrounding data consent and compensation from data owners.
    Our work is situated in the context of decentralized data marketplaces, which promise an alternative model of data acquisition and ownership.
    Existing data brokers have aggregated vast amounts of data from numerous sources to be sold off in opaque transactions, often without the data users' knowledge or consent. 
    Questions revolving around data ownership and what rights should be circumscribed are also still under debate.
    More public dialog about the potential benefits and harms of data sharing is needed. 
    
    Our work presents a scalable and decentralized approach to selecting the most useful datapoints for a buyer.
    Selecting a subset of data is more economical and may result in less overall privacy loss than broad, indiscriminate data access~\cite{nielsen99whose}.
    Our method allows for a more targeted approach to data acquisition that is more efficient for both the data buyer and the data seller.
    However, we also hope to advance the idea that more decentralized models of data governance will result in greater transparency and control by individual data owners over how data is shared and used.
    Centralizing all data with the broker can result in undesirable privacy and security risks, such as data breaches.
    In contrast, decentralized data marketplaces may be more robust and transparent. 
    Bypassing intermediate data brokers will enhance privacy and increase market efficiency. Transaction costs can be reduced, and revenue can be directly captured by data producers. This will enable a greater number of data transactions and sustain more types of markets.
    However, several other societal and technical challenges must be addressed in order to develop these decentralized data markets responsibly and safely.



\newpage
\section*{NeurIPS Paper Checklist}

\begin{enumerate}

\item {\bf Claims}
    \item[] Question: Do the main claims made in the abstract and introduction accurately reflect the paper's contributions and scope?
    \item[] Answer: \answerYes{} 
    \item[] Justification: We believe that the abstract and introduction accurately reflect the paper's contribution and scope. 
    \item[] Guidelines:
    \begin{itemize}
        \item The answer NA means that the abstract and introduction do not include the claims made in the paper.
        \item The abstract and/or introduction should clearly state the claims made, including the contributions made in the paper and important assumptions and limitations. A No or NA answer to this question will not be perceived well by the reviewers. 
        \item The claims made should match theoretical and experimental results, and reflect how much the results can be expected to generalize to other settings. 
        \item It is fine to include aspirational goals as motivation as long as it is clear that these goals are not attained by the paper. 
    \end{itemize}

\item {\bf Limitations}
    \item[] Question: Does the paper discuss the limitations of the work performed by the authors?
    \item[] Answer: \answerYes{} 
    \item[] Justification: We address the limitations of our methods in Section~\ref{sec:discussion}.
    \item[] Guidelines:
    \begin{itemize}
        \item The answer NA means that the paper has no limitation while the answer No means that the paper has limitations, but those are not discussed in the paper. 
        \item The authors are encouraged to create a separate "Limitations" section in their paper.
        \item The paper should point out any strong assumptions and how robust the results are to violations of these assumptions (e.g., independence assumptions, noiseless settings, model well-specification, asymptotic approximations only holding locally). The authors should reflect on how these assumptions might be violated in practice and what the implications would be.
        \item The authors should reflect on the scope of the claims made, e.g., if the approach was only tested on a few datasets or with a few runs. In general, empirical results often depend on implicit assumptions, which should be articulated.
        \item The authors should reflect on the factors that influence the performance of the approach. For example, a facial recognition algorithm may perform poorly when image resolution is low or images are taken in low lighting. Or a speech-to-text system might not be used reliably to provide closed captions for online lectures because it fails to handle technical jargon.
        \item The authors should discuss the computational efficiency of the proposed algorithms and how they scale with dataset size.
        \item If applicable, the authors should discuss possible limitations of their approach to address problems of privacy and fairness.
        \item While the authors might fear that complete honesty about limitations might be used by reviewers as grounds for rejection, a worse outcome might be that reviewers discover limitations that aren't acknowledged in the paper. The authors should use their best judgment and recognize that individual actions in favor of transparency play an important role in developing norms that preserve the integrity of the community. Reviewers will be specifically instructed to not penalize honesty concerning limitations.
    \end{itemize}

\item {\bf Theory Assumptions and Proofs}
    \item[] Question: For each theoretical result, does the paper provide the full set of assumptions and a complete (and correct) proof?
    \item[] Answer: \answerYes{} 
    \item[] Justification: We provide proofs of all main theorems in Appendix~\ref{thm:data-shapley} and Appendix~\ref{app:convergence}.
    \item[] Guidelines:
    \begin{itemize}
        \item The answer NA means that the paper does not include theoretical results. 
        \item All the theorems, formulas, and proofs in the paper should be numbered and cross-referenced.
        \item All assumptions should be clearly stated or referenced in the statement of any theorems.
        \item The proofs can either appear in the main paper or the supplemental material, but if they appear in the supplemental material, the authors are encouraged to provide a short proof sketch to provide intuition. 
        \item Inversely, any informal proof provided in the core of the paper should be complemented by formal proofs provided in appendix or supplemental material.
        \item Theorems and Lemmas that the proof relies upon should be properly referenced. 
    \end{itemize}

    \item {\bf Experimental Result Reproducibility}
    \item[] Question: Does the paper fully disclose all the information needed to reproduce the main experimental results of the paper to the extent that it affects the main claims and/or conclusions of the paper (regardless of whether the code and data are provided or not)?
    \item[] Answer: \answerYes{} 
    \item[] Justification: We believe we have provided enough detail to reproduce our results in the main body and Appendix~\ref{app:setup}. We will provide all the code used in the experiments. 
    \item[] Guidelines:
    \begin{itemize}
        \item The answer NA means that the paper does not include experiments.
        \item If the paper includes experiments, a No answer to this question will not be perceived well by the reviewers: Making the paper reproducible is important, regardless of whether the code and data are provided or not.
        \item If the contribution is a dataset and/or model, the authors should describe the steps taken to make their results reproducible or verifiable. 
        \item Depending on the contribution, reproducibility can be accomplished in various ways. For example, if the contribution is a novel architecture, describing the architecture fully might suffice, or if the contribution is a specific model and empirical evaluation, it may be necessary to either make it possible for others to replicate the model with the same dataset, or provide access to the model. In general. releasing code and data is often one good way to accomplish this, but reproducibility can also be provided via detailed instructions for how to replicate the results, access to a hosted model (e.g., in the case of a large language model), releasing of a model checkpoint, or other means that are appropriate to the research performed.
        \item While NeurIPS does not require releasing code, the conference does require all submissions to provide some reasonable avenue for reproducibility, which may depend on the nature of the contribution. For example
        \begin{enumerate}
            \item If the contribution is primarily a new algorithm, the paper should make it clear how to reproduce that algorithm.
            \item If the contribution is primarily a new model architecture, the paper should describe the architecture clearly and fully.
            \item If the contribution is a new model (e.g., a large language model), then there should either be a way to access this model for reproducing the results or a way to reproduce the model (e.g., with an open-source dataset or instructions for how to construct the dataset).
            \item We recognize that reproducibility may be tricky in some cases, in which case authors are welcome to describe the particular way they provide for reproducibility. In the case of closed-source models, it may be that access to the model is limited in some way (e.g., to registered users), but it should be possible for other researchers to have some path to reproducing or verifying the results.
        \end{enumerate}
    \end{itemize}

\item {\bf Open access to data and code}
    \item[] Question: Does the paper provide open access to the data and code, with sufficient instructions to faithfully reproduce the main experimental results, as described in supplemental material?
    \item[] Answer: \answerYes{} 
    \item[] Justification: In the supplemental material, we provide a simplified code example and will release the full codebase in a public code repository upon acceptance.
    \item[] Guidelines:
    \begin{itemize}
        \item The answer NA means that paper does not include experiments requiring code.
        \item Please see the NeurIPS code and data submission guidelines (\url{https://nips.cc/public/guides/CodeSubmissionPolicy}) for more details.
        \item While we encourage the release of code and data, we understand that this might not be possible, so “No” is an acceptable answer. Papers cannot be rejected simply for not including code, unless this is central to the contribution (e.g., for a new open-source benchmark).
        \item The instructions should contain the exact command and environment needed to run to reproduce the results. See the NeurIPS code and data submission guidelines (\url{https://nips.cc/public/guides/CodeSubmissionPolicy}) for more details.
        \item The authors should provide instructions on data access and preparation, including how to access the raw data, preprocessed data, intermediate data, and generated data, etc.
        \item The authors should provide scripts to reproduce all experimental results for the new proposed method and baselines. If only a subset of experiments are reproducible, they should state which ones are omitted from the script and why.
        \item At submission time, to preserve anonymity, the authors should release anonymized versions (if applicable).
        \item Providing as much information as possible in supplemental material (appended to the paper) is recommended, but including URLs to data and code is permitted.
    \end{itemize}

\item {\bf Experimental Setting/Details}
    \item[] Question: Does the paper specify all the training and test details (e.g., data splits, hyperparameters, how they were chosen, type of optimizer, etc.) necessary to understand the results?
    \item[] Answer: \answerYes{} 
    \item[] Justification: We provide details of our experimental setup in Appendix~\ref{app:setup}.
    \item[] Guidelines:
    \begin{itemize}
        \item The answer NA means that the paper does not include experiments.
        \item The experimental setting should be presented in the core of the paper to a level of detail that is necessary to appreciate the results and make sense of them.
        \item The full details can be provided either with the code, in appendix, or as supplemental material.
    \end{itemize}

\item {\bf Experiment Statistical Significance}
    \item[] Question: Does the paper report error bars suitably and correctly defined or other appropriate information about the statistical significance of the experiments?
    \item[] Answer: \answerNo{} 
    \item[] Justification: We report all results averaged over 100 random test trials. We avoid plotting error bars for some of the figures to avoid overlapping too many lines. 
    \item[] Guidelines:
    \begin{itemize}
        \item The answer NA means that the paper does not include experiments.
        \item The authors should answer "Yes" if the results are accompanied by error bars, confidence intervals, or statistical significance tests, at least for the experiments that support the main claims of the paper.
        \item The factors of variability that the error bars are capturing should be clearly stated (for example, train/test split, initialization, random drawing of some parameter, or overall run with given experimental conditions).
        \item The method for calculating the error bars should be explained (closed form formula, call to a library function, bootstrap, etc.)
        \item The assumptions made should be given (e.g., Normally distributed errors).
        \item It should be clear whether the error bar is the standard deviation or the standard error of the mean.
        \item It is OK to report 1-sigma error bars, but one should state it. The authors should preferably report a 2-sigma error bar than state that they have a 96\% CI, if the hypothesis of Normality of errors is not verified.
        \item For asymmetric distributions, the authors should be careful not to show in tables or figures symmetric error bars that would yield results that are out of range (e.g. negative error rates).
        \item If error bars are reported in tables or plots, The authors should explain in the text how they were calculated and reference the corresponding figures or tables in the text.
    \end{itemize}

\item {\bf Experiments Compute Resources}
    \item[] Question: For each experiment, does the paper provide sufficient information on the computer resources (type of compute workers, memory, time of execution) needed to reproduce the experiments?
    \item[] Answer: \answerYes{} 
    \item[] Justification: We conduct empirical runtime benchmarking in Figure~\ref{fig:runtime}. We provide details of the hardware used for the experiments in Appendix~\ref{app:setup}.
    \item[] Guidelines:
    \begin{itemize}
        \item The answer NA means that the paper does not include experiments.
        \item The paper should indicate the type of compute workers CPU or GPU, internal cluster, or cloud provider, including relevant memory and storage.
        \item The paper should provide the amount of compute required for each of the individual experimental runs as well as estimate the total compute. 
        \item The paper should disclose whether the full research project required more compute than the experiments reported in the paper (e.g., preliminary or failed experiments that didn't make it into the paper). 
    \end{itemize}
    
\item {\bf Code Of Ethics}
    \item[] Question: Does the research conducted in the paper conform, in every respect, with the NeurIPS Code of Ethics \url{https://neurips.cc/public/EthicsGuidelines}?
    \item[] Answer: \answerYes{} 
    \item[] Justification: We have reviewed the Code of Ethics and affirm that this paper conforms to these principles. 
    \item[] Guidelines:
    \begin{itemize}
        \item The answer NA means that the authors have not reviewed the NeurIPS Code of Ethics.
        \item If the authors answer No, they should explain the special circumstances that require a deviation from the Code of Ethics.
        \item The authors should make sure to preserve anonymity (e.g., if there is a special consideration due to laws or regulations in their jurisdiction).
    \end{itemize}

\item {\bf Broader Impacts}
    \item[] Question: Does the paper discuss both potential positive societal impacts and negative societal impacts of the work performed?
    \item[] Answer: \answerYes{} 
    \item[] Justification: We have included a section on the broader impacts of our work in Appendix~\ref{app:ethics}. 
    \item[] Guidelines:
    \begin{itemize}
        \item The answer NA means that there is no societal impact of the work performed.
        \item If the authors answer NA or No, they should explain why their work has no societal impact or why the paper does not address societal impact.
        \item Examples of negative societal impacts include potential malicious or unintended uses (e.g., disinformation, generating fake profiles, surveillance), fairness considerations (e.g., deployment of technologies that could make decisions that unfairly impact specific groups), privacy considerations, and security considerations.
        \item The conference expects that many papers will be foundational research and not tied to particular applications, let alone deployments. However, if there is a direct path to any negative applications, the authors should point it out. For example, it is legitimate to point out that an improvement in the quality of generative models could be used to generate deepfakes for disinformation. On the other hand, it is not needed to point out that a generic algorithm for optimizing neural networks could enable people to train models that generate Deepfakes faster.
        \item The authors should consider possible harms that could arise when the technology is being used as intended and functioning correctly, harms that could arise when the technology is being used as intended but gives incorrect results, and harms following from (intentional or unintentional) misuse of the technology.
        \item If there are negative societal impacts, the authors could also discuss possible mitigation strategies (e.g., gated release of models, providing defenses in addition to attacks, mechanisms for monitoring misuse, mechanisms to monitor how a system learns from feedback over time, improving the efficiency and accessibility of ML).
    \end{itemize}
    
\item {\bf Safeguards}
    \item[] Question: Does the paper describe safeguards that have been put in place for responsible release of data or models that have a high risk for misuse (e.g., pretrained language models, image generators, or scraped datasets)?
    \item[] Answer: \answerNA{} 
    \item[] Justification: We believe this paper does not pose such risks as we do not release any new models or datasets.
    \item[] Guidelines:
    \begin{itemize}
        \item The answer NA means that the paper poses no such risks.
        \item Released models that have a high risk for misuse or dual-use should be released with necessary safeguards to allow for controlled use of the model, for example by requiring that users adhere to usage guidelines or restrictions to access the model or implementing safety filters. 
        \item Datasets that have been scraped from the Internet could pose safety risks. The authors should describe how they avoided releasing unsafe images.
        \item We recognize that providing effective safeguards is challenging, and many papers do not require this, but we encourage authors to take this into account and make a best faith effort.
    \end{itemize}

\item {\bf Licenses for existing assets}
    \item[] Question: Are the creators or original owners of assets (e.g., code, data, models), used in the paper, properly credited and are the license and terms of use explicitly mentioned and properly respected?
    \item[] Answer: \answerYes{} 
    \item[] Justification: We use publicly available datasets in our experiments and have cited the corresponding references. 
    \item[] Guidelines:
    \begin{itemize}
        \item The answer NA means that the paper does not use existing assets.
        \item The authors should cite the original paper that produced the code package or dataset.
        \item The authors should state which version of the asset is used and, if possible, include a URL.
        \item The name of the license (e.g., CC-BY 4.0) should be included for each asset.
        \item For scraped data from a particular source (e.g., website), the copyright and terms of service of that source should be provided.
        \item If assets are released, the license, copyright information, and terms of use in the package should be provided. For popular datasets, \url{paperswithcode.com/datasets} has curated licenses for some datasets. Their licensing guide can help determine the license of a dataset.
        \item For existing datasets that are re-packaged, both the original license and the license of the derived asset (if it has changed) should be provided.
        \item If this information is not available online, the authors are encouraged to reach out to the asset's creators.
    \end{itemize}

\item {\bf New Assets}
    \item[] Question: Are new assets introduced in the paper well documented and is the documentation provided alongside the assets?
    \item[] Answer: \answerNA{} 
    \item[] Justification: This paper does not introduce new assets.
    \item[] Guidelines:
    \begin{itemize}
        \item The answer NA means that the paper does not release new assets.
        \item Researchers should communicate the details of the dataset/code/model as part of their submissions via structured templates. This includes details about training, license, limitations, etc. 
        \item The paper should discuss whether and how consent was obtained from people whose asset is used.
        \item At submission time, remember to anonymize your assets (if applicable). You can either create an anonymized URL or include an anonymized zip file.
    \end{itemize}

\item {\bf Crowdsourcing and Research with Human Subjects}
    \item[] Question: For crowdsourcing experiments and research with human subjects, does the paper include the full text of instructions given to participants and screenshots, if applicable, as well as details about compensation (if any)? 
    \item[] Answer: \answerNA{} 
    \item[] Justification: This research does not use crowdsourcing with human subjects.b
    \item[] Guidelines:
    \begin{itemize}
        \item The answer NA means that the paper does not involve crowdsourcing nor research with human subjects.
        \item Including this information in the supplemental material is fine, but if the main contribution of the paper involves human subjects, then as much detail as possible should be included in the main paper. 
        \item According to the NeurIPS Code of Ethics, workers involved in data collection, curation, or other labor should be paid at least the minimum wage in the country of the data collector. 
    \end{itemize}

\item {\bf Institutional Review Board (IRB) Approvals or Equivalent for Research with Human Subjects}
    \item[] Question: Does the paper describe potential risks incurred by study participants, whether such risks were disclosed to the subjects, and whether Institutional Review Board (IRB) approvals (or an equivalent approval/review based on the requirements of your country or institution) were obtained?
    \item[] Answer: \answerNA{} 
    \item[] Justification: This research does not use human subjects and is not subject to IRB approval. 
    \item[] Guidelines:
    \begin{itemize}
        \item The answer NA means that the paper does not involve crowdsourcing nor research with human subjects.
        \item Depending on the country in which research is conducted, IRB approval (or equivalent) may be required for any human subjects research. If you obtained IRB approval, you should clearly state this in the paper. 
        \item We recognize that the procedures for this may vary significantly between institutions and locations, and we expect authors to adhere to the NeurIPS Code of Ethics and the guidelines for their institution. 
        \item For initial submissions, do not include any information that would break anonymity (if applicable), such as the institution conducting the review.
    \end{itemize}

\end{enumerate}

\end{document}